\documentclass{article} % For LaTeX2e
\usepackage{iclr2018_conference,times}
\usepackage{hyperref}
\usepackage{url}
\usepackage{multirow}
\usepackage{multicol}
\usepackage{caption}

\usepackage{amsmath,amssymb,amsthm,bbm,mathtools}
\usepackage{algorithm,algorithmicx,algpseudocode}

\DeclareMathOperator{\var}{Var}
\DeclareMathOperator{\mean}{E}
\DeclareMathOperator{\sgn}{sgn}

\newcommand{\wpone}{w.p.$1$}

\theoremstyle{plain}\newtheorem{theorem}{Theorem}%[section]
\theoremstyle{plain}\newtheorem{proposition}[theorem]{Proposition}
\theoremstyle{plain}
\theoremstyle{plain}

\title{FastGCN: Fast Learning with Graph Convolutional Networks via Importance Sampling}

% Authors must not appear in the submitted version. They should be hidden
% as long as the \iclrfinalcopy macro remains commented out below.
% Non-anonymous submissions will be rejected without review.

%\author{Anonymous authors \\
%Paper under double-blind review
%}
\author{Jie Chen\thanks{These two authors contribute equally.}\,\,, \,
  Tengfei Ma\footnotemark[1]\,\,,\,
  Cao Xiao \\
IBM Research\\
\texttt{chenjie@us.ibm.com, Tengfei.Ma1@ibm.com, cxiao@us.ibm.com}
}

% The \author macro works with any number of authors. There are two commands
% used to separate the names and addresses of multiple authors: \And and \AND.
%
% Using \And between authors leaves it to \LaTeX{} to determine where to break
% the lines. Using \AND forces a linebreak at that point. So, if \LaTeX{}
% puts 3 of 4 authors names on the first line, and the last on the second
% line, try using \AND instead of \And before the third author name.

\iclrfinalcopy % Uncomment for camera-ready version, but NOT for submission.

\begin{document}

\maketitle

\begin{abstract}
The graph convolutional networks (GCN) recently proposed by Kipf and Welling are an effective graph model for semi-supervised learning. This model, however, was originally designed to be learned with the presence of both training and test data. Moreover, the recursive neighborhood expansion across layers poses time and memory challenges for training with large, dense graphs. To relax the requirement of simultaneous availability of test data, we interpret graph convolutions as integral transforms of embedding functions under probability measures. Such an interpretation allows for the use of Monte Carlo approaches to consistently estimate the integrals, which in turn leads to a batched training scheme as we propose in this work---FastGCN. Enhanced with importance sampling, FastGCN not only is efficient for training but also generalizes well for inference. We show a comprehensive set of experiments to demonstrate its effectiveness compared with GCN and related models. In particular, training is orders of magnitude more efficient while predictions remain comparably accurate.
\end{abstract}

%%%%%%%%%%%%%%%%%%%%%%%%%%%%%%%%%%%%%%%%%%%%%%%%%%%%%%%%%%%%%%%%%%%%%%%%%%%%%%%
\section{Introduction}

Graphs are universal representations of pairwise relationship. Many real world data come naturally in the form of graphs; e.g., social networks, gene expression networks, and knowledge graphs.
% similarity graphs
To improve the performance of graph-based learning tasks, such as node classification and link prediction, recently much effort is made to extend well-established network architectures, including recurrent neural networks (RNN) and convolutional neural networks (CNN), to graph data; see, e.g., \cite{DBLP:journals/corr/BrunaZSL13,NIPS2015_5954,DBLP:journals/corr/LiTBZ15,DBLP:journals/corr/JainZSS15,DBLP:journals/corr/HenaffBL15,DBLP:journals/corr/NiepertAK16,DBLP:journals/corr/KipfW16,VAE2016}.

Whereas learning feature representations for graphs is an important subject among this effort, here, we focus on the feature representations for graph vertices. In this vein, the closest work that applies a convolution architecture is the graph convolutional network (GCN)~\citep{DBLP:journals/corr/KipfW16,VAE2016}. Borrowing the concept of a convolution filter for image pixels or a linear array of signals, GCN uses the connectivity structure of the graph as the filter to perform neighborhood mixing. The architecture may be elegantly summarized by the following expression:
\[
H^{(l+1)}=\sigma(\hat{A}H^{(l)}W^{(l)}),
\]
where $\hat{A}$ is some normalization of the graph adjacency matrix, $H^{(l)}$ contains the embedding (rowwise) of the graph vertices in the $l$th layer, $W^{(l)}$ is a parameter matrix, and $\sigma$ is nonlinearity.

As with many graph algorithms, the adjacency matrix encodes the pairwise relationship for both training and test data. The learning of the model as well as the embedding is performed for both data simultaneously, at least as the authors proposed. For many applications, however, test data may not be readily available, because the graph may be constantly expanding with new vertices (e.g. new members of a social network, new products to a recommender system, and new drugs for functionality tests). Such scenarios require an inductive scheme that learns a model from only a training set of vertices and that generalizes well to any augmentation of the graph.

A more severe challenge for GCN is that the recursive expansion of neighborhoods across layers incurs expensive computations in batched training. Particularly for dense graphs and powerlaw graphs, the expansion of the neighborhood for a single vertex quickly fills up a large portion of the graph. Then, a usual mini-batch training will involve a large amount of data for every batch, even with a small batch size. Hence, scalability is a pressing issue to resolve for GCN to be applicable to large, dense graphs.

To address both challenges, we propose to view graph convolutions from a different angle and interpret them as integral transforms of embedding functions under probability measures. Such a view provides a principled mechanism for inductive learning, starting from the formulation of the loss to the stochastic version of the gradient. Specifically, we interpret that graph vertices are iid samples of some probability distribution and write the loss and each convolution layer as integrals with respect to vertex embedding functions. Then, the integrals are evaluated through Monte Carlo approximation that defines the sample loss and the sample gradient. One may further alter the sampling distribution (as in importance sampling) to reduce the approximation variance.

The proposed approach, coined FastGCN, not only rids the reliance on the test data but also yields a controllable cost for per-batch computation. At the time of writing, we notice a newly published work GraphSAGE~\citep{DBLP:journals/corr/HamiltonYL17} that proposes also the use of sampling to reduce the computational footprint of GCN. Our sampling scheme is more economic, resulting in a substantial saving in the gradient computation, as will be analyzed in more detail in Section~\ref{sec:graphsage}. Experimental results in Section~\ref{sec:exp} indicate that the per-batch computation of FastGCN is more than an order of magnitude faster than that of GraphSAGE, while classification accuracies are highly comparable.

%%%%%%%%%%%%%%%%%%%%%%%%%%%%%%%%%%%%%%%%%%%%%%%%%%%%%%%%%%%%%%%%%%%%%%%%%%%%%%%
\section{Related Work}

%\textbf{Graph Convolutional Networks}
Over the past few years, several graph-based convolution network models emerged for addressing applications of graph-structured data, such as the representation of molecules~\citep{NIPS2015_5954}. An important stream of work is built on spectral graph theory~\citep{DBLP:journals/corr/BrunaZSL13, DBLP:journals/corr/HenaffBL15, DBLP:journals/corr/DefferrardBV16}. They define parameterized filters in the spectral domain, inspired by graph Fourier transform. These approaches learn a feature representation for the whole graph and may be used for graph classification.

Another line of work learns embeddings for graph vertices, for which \cite{DBLP:journals/corr/GoyalF17} is a recent survey that covers comprehensively several categories of methods. A major category consists of factorization based algorithms that yield the embedding through matrix factorizations; see, e.g., \cite{Roweis2323,Belkin:2001:LES:2980539.2980616,Ahmed:2013:DLN:2488388.2488393,Cao:2015:GLG:2806416.2806512,Ou:2016:ATP:2939672.2939751}. These methods learn the representations of training and test data jointly. Another category is random walk based methods~\citep{Perozzi:2014:DOL:2623330.2623732,Grover:2016:NSF:2939672.2939754} that compute node representations through exploration of neighborhoods. LINE~\citep{Tang:2015:LLI:2736277.2741093} is also such a technique that is motivated by the preservation of the first and second-order proximities. Meanwhile, there appear a few deep neural network architectures, which better capture the nonlinearity within graphs, such as SDNE~\citep{Wang:2016:SDN:2939672.2939753}. As motivated earlier, GCN~\citep{DBLP:journals/corr/KipfW16} is the model on which our work is based.

The most relevant work to our approach is GraphSAGE~\citep{DBLP:journals/corr/HamiltonYL17}, which learns node representations through aggregation of neighborhood information. One of the proposed aggregators employs the GCN architecture. The authors also acknowledge the memory bottleneck of GCN and hence propose an ad hoc sampling scheme to restrict the neighborhood size. Our sampling approach is based on a different and more principled formulation. The major distinction is that we sample vertices rather than neighbors. The resulting computational savings are analyzed in Section~\ref{sec:graphsage}.

%%%%%%%%%%%%%%%%%%%%%%%%%%%%%%%%%%%%%%%%%%%%%%%%%%%%%%%%%%%%%%%%%%%%%%%%%%%%%%%
\section{Training and Inference through Sampling}\label{sec:sampling}

One striking difference between GCN and many standard neural network architectures is the lack of independence in the sample loss. Training algorithms such as SGD and its batch generalization are designed based on the additive nature of the loss function with respect to independent data samples. For graphs, on the other hand, each vertex is convolved with all its neighbors and hence defining a sample gradient that is efficient to compute is beyond straightforward.

Concretely, consider the standard SGD scenario where the loss is the expectation of some function $g$ with respect to a data distribution $D$:
\[
L=\mean_{x\sim D}[g(W;x)].
\]
Here, $W$ denotes the model parameter to be optimized. Of course, the data distribution is generally unknown and one instead minimizes the empirical loss through accessing $n$ iid samples $x_1,\ldots,x_n$:
\[
L_{\text{emp}}=\frac{1}{n}\sum_{i=1}^ng(W;x_i),
\qquad x_i\sim D,\,\,\forall\,i.
\]
In each step of SGD, the gradient is approximated by $\nabla g(W;x_i)$, an (assumed) unbiased sample of $\nabla L$. One may interpret that each gradient step makes progress toward the sample loss $g(W;x_i)$. The sample loss and the sample gradient involve only one single sample $x_i$.

For graphs, one may no longer leverage the independence and compute the sample gradient $\nabla g(W;x_i)$ by discarding the information of $i$'s neighboring vertices and their neighbors, recursively. We therefore seek an alternative formulation. In order to cast the learning problem under the same sampling framework, let us assume that there is a (possibly infinite) graph $G'$ with the vertex set $V'$ associated with a probability space $(V',F,P)$, such that for the given graph $G$, it is an induced subgraph of $G'$ and its vertices are iid samples of $V'$ according to the probability measure $P$. For the probability space, $V'$ serves as the sample space and $F$ may be any event space (e.g., the power set $F=2^{V'}$). The probability measure $P$ defines a sampling distribution.

To resolve the problem of lack of independence caused by convolution, we interpret that each layer of the network defines an embedding function of the vertices (random variable) that are tied to the same probability measure but are independent. See Figure~\ref{fig:architecture}. Specifically, recall the architecture of GCN
\begin{equation}\label{eqn:GCN}
\tilde{H}^{(l+1)} = \hat{A}H^{(l)}W^{(l)}, \quad
H^{(l+1)} = \sigma(\tilde{H}^{(l+1)}), \quad l=0,\ldots,M-1, \quad
L=\frac{1}{n}\sum_{i=1}^ng(H^{(M)}(i,:)).
\end{equation}
For the functional generalization, we write
\begin{gather}
\tilde{h}^{(l+1)}(v) = \int \hat{A}(v,u)h^{(l)}(u)W^{(l)}\,dP(u), \quad
h^{(l+1)}(v) = \sigma(\tilde{h}^{(l+1)}(v)), \quad l=0,\ldots,M-1,
\label{eqn:h1} \\
L=\mean_{v\sim P}[g(h^{(M)}(v))]=\int g(h^{(M)}(v))\,dP(v).
\label{eqn:h2}
\end{gather}
Here, $u$ and $v$ are independent random variables, both of which have the same probability measure $P$. The function $h^{(l)}$ is interpreted as the embedding function from the $l$th layer. The embedding functions from two consecutive layers are related through convolution, expressed as an integral transform, where the kernel $\hat{A}(v,u)$ corresponds to the $(v,u)$ element of the matrix $\hat{A}$. The loss is the expectation of $g(h^{(M)})$ for the final embedding $h^{(M)}$. Note that the integrals are not the usual Riemann--Stieltjes integrals, because the variables $u$ and $v$ are graph vertices but not real numbers; however, this distinction is only a matter of formalism.

% Probably should manually place the figure in the right place.
\begin{figure}[t]
\centering
\includegraphics[width = .8\linewidth]{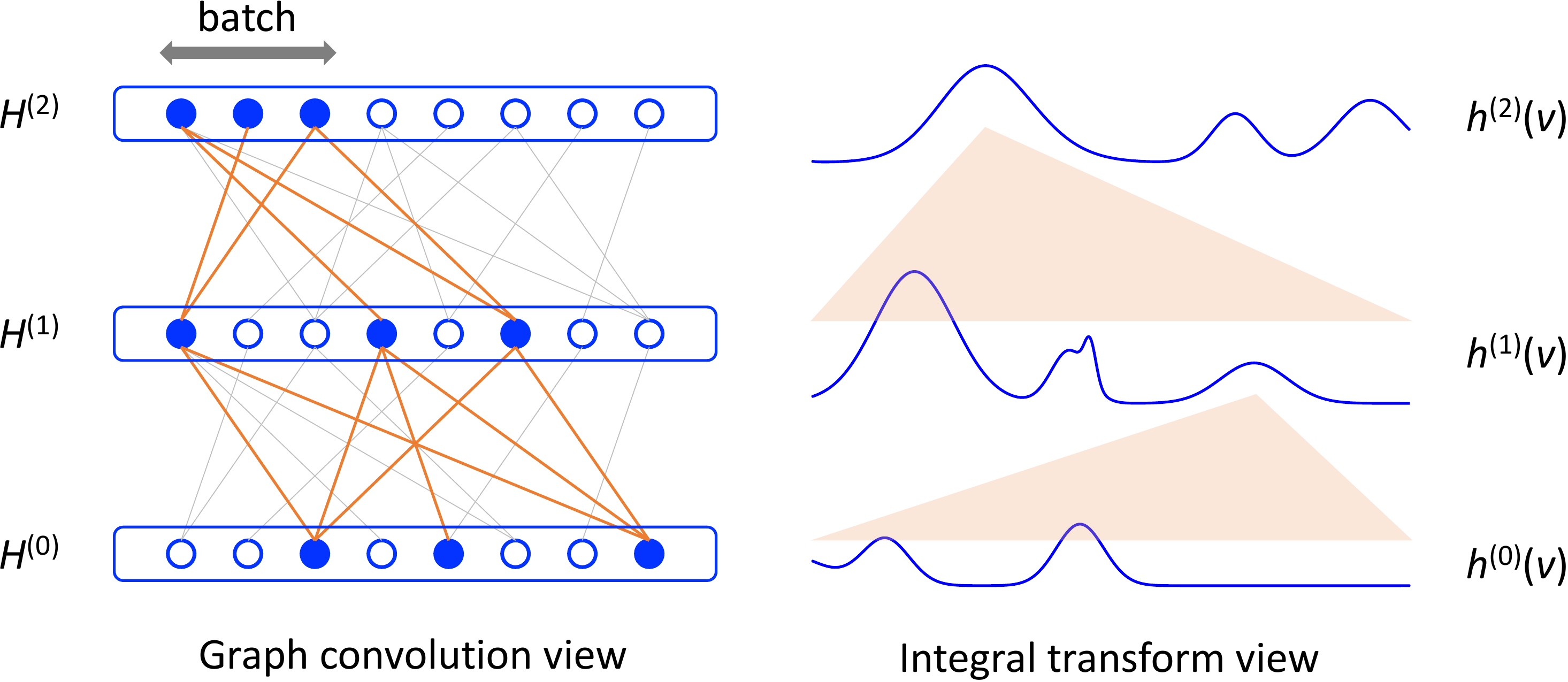}
\caption{Two views of GCN. On the left (graph convolution view), each circle represents a graph vertex. On two consecutive rows, a circle $i$ is connected (in gray line) with circle $j$ if the two corresponding vertices in the graph are connected. A convolution layer uses the graph connectivity structure to mix the vertex features/embeddings. On the right (integral transform view), the embedding function in the next layer is an integral transform (illustrated by the orange fanout shape) of the one in the previous layer. For the proposed method, all integrals (including the loss function) are evaluated by using Monte Carlo sampling. Correspondingly in the graph view, vertices are subsampled in a bootstrapping manner in each layer to approximate the convolution. The sampled portions are collectively denoted by the solid blue circles and the orange lines.}
\label{fig:architecture}
\end{figure}

Writing GCN in the functional form allows for evaluating the integrals in the Monte Carlo manner, which leads to a batched training algorithm and also to a natural separation of training and test data, as in inductive learning. For each layer $l$, we use $t_l$ iid samples $u^{(l)}_1,\ldots,u^{(l)}_{t_l}\sim P$ to approximately evaluate the integral transform~\eqref{eqn:h1}; that is,
\[
\tilde{h}^{(l+1)}_{t_{l+1}}(v) := \frac{1}{t_l}\sum_{j=1}^{t_l} \hat{A}(v,u^{(l)}_j)h^{(l)}_{t_l}(u^{(l)}_j)W^{(l)}, \quad
h^{(l+1)}_{t_{l+1}}(v) := \sigma(\tilde{h}^{(l+1)}_{t_{l+1}}(v)), \quad l=0,\ldots,M-1,
\]
with the convention $h^{(0)}_{t_0}\equiv h^{(0)}$.
Then, the loss $L$ in~\eqref{eqn:h2} admits an estimator
\[
L_{t_0,t_1,\ldots,t_M}:=\frac{1}{t_M}\sum_{i=1}^{t_M}g(h^{(M)}_{t_M}(u^{(M)}_i)).
\]
The follow result establishes that the estimator is consistent. The proof is a recursive application of the law of large numbers and the continuous mapping theorem; it is given in the appendix.

\begin{theorem}\label{thm:L}
If $g$ and $\sigma$ are continuous, then
\[
\lim_{t_0,t_1,\ldots,t_M \to\infty}L_{t_0,t_1,\ldots,t_M}=L \quad
\text{with probability one}.
\]
\end{theorem}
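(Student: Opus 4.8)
The plan is to induct on the layer index $l$, showing that for every fixed vertex $v$ the sampled embedding converges, $h^{(l)}_{t_l}(v)\to h^{(l)}(v)$ \wpone\ as the lower-layer sample sizes $t_0,\ldots,t_{l-1}\to\infty$, and then to treat the loss layer by the very same mechanism. The base case $l=0$ is immediate, since $h^{(0)}_{t_0}\equiv h^{(0)}$ carries no randomness.

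For the inductive step I would fix $v$ and split the pre-activation error at layer $l+1$ into a Monte Carlo part and a propagated-error part:
\[
\tilde h^{(l+1)}_{t_{l+1}}(v)-\tilde h^{(l+1)}(v)
=\frac{1}{t_l}\sum_{j=1}^{t_l}\hat A(v,u^{(l)}_j)\bigl[h^{(l)}_{t_l}(u^{(l)}_j)-h^{(l)}(u^{(l)}_j)\bigr]W^{(l)}
+\Bigl(\frac{1}{t_l}\sum_{j=1}^{t_l}\hat A(v,u^{(l)}_j)\,h^{(l)}(u^{(l)}_j)\,W^{(l)}-\tilde h^{(l+1)}(v)\Bigr).
\]
The second summand is an average of i.i.d.\ terms with mean $\tilde h^{(l+1)}(v)$, so it tends to $0$ \wpone\ by the strong law of large numbers. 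For the first summand I would use that $\{u^{(l)}_j\}_j$ is drawn independently of every sample set used in layers $0,\ldots,l-1$; conditioning on those lower-layer samples freezes $h^{(l)}_{t_l}$ into a deterministic function, so the strong law applies conditionally and the first summand converges \wpone\ to $\int\hat A(v,u)[h^{(l)}_{t_l}(u)-h^{(l)}(u)]W^{(l)}\,dP(u)$. By the induction hypothesis this integrand goes to $0$ pointwise \wpone, and since the entries of $\hat A$ and all embedding functions stay uniformly bounded, dominated convergence sends the integral to $0$ as $t_0,\ldots,t_{l-1}\to\infty$. Hence $\tilde h^{(l+1)}_{t_{l+1}}(v)\to\tilde h^{(l+1)}(v)$ \wpone, and applying continuity of $\sigma$ (the continuous mapping theorem) gives $h^{(l+1)}_{t_{l+1}}(v)=\sigma(\tilde h^{(l+1)}_{t_{l+1}}(v))\to\sigma(\tilde h^{(l+1)}(v))=h^{(l+1)}(v)$ \wpone, which closes the induction.

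The loss estimator is then just one more copy of the inductive step, with $g(h^{(M)}(\cdot))$ in place of the integrand $\hat A(v,\cdot)h^{(l)}(\cdot)W^{(l)}$ and continuity of $g$ in place of that of $\sigma$: write
\[
L_{t_0,\ldots,t_M}-L=\frac{1}{t_M}\sum_{i=1}^{t_M}\bigl[g(h^{(M)}_{t_M}(u^{(M)}_i))-g(h^{(M)}(u^{(M)}_i))\bigr]+\Bigl(\frac{1}{t_M}\sum_{i=1}^{t_M}g(h^{(M)}(u^{(M)}_i))-L\Bigr),
\]
send the second bracket to $0$ by the strong law, and send the first to $0$ using the independence of $\{u^{(M)}_i\}_i$ from the lower layers, the layer-$M$ case of the induction, continuity of $g$, and the same dominated-convergence estimate. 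This yields $L_{t_0,\ldots,t_M}\to L$ \wpone.

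I expect the propagated-error term---replacing the exact embedding $h^{(l)}$ by its sampled surrogate $h^{(l)}_{t_l}$ inside a Monte Carlo average---to be the only real obstacle, and this is exactly where the construction of the excerpt pays off: across-layer independence of the sample sets lets one condition and fall back on the ordinary strong law, while uniform boundedness of the kernel $\hat A$ and of the embeddings lets the induction hypothesis pass through the integral. If one is unwilling to assume such boundedness, this step needs a uniform-integrability argument instead, and the ``$\lim$'' in the statement should be read as an iterated limit, innermost sample size first.
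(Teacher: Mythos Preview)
Your proposal is correct and follows essentially the same route as the paper's proof: induct on the layer index, invoke the strong law of large numbers for the Monte Carlo average at each layer, and push convergence through the activation via the continuous mapping theorem, with the loss layer handled identically using continuity of $g$. Your write-up is in fact more explicit than the paper's, which simply asserts that $\tilde h^{(l+1)}_{t_{l+1}}(v)\to\int\hat A(v,u)h^{(l)}_{t_l}(u)W^{(l)}\,dP(u)\to\tilde h^{(l+1)}(v)$ without spelling out the two-term decomposition, the conditioning on lower-layer samples, or the dominated-convergence/boundedness issue you flag.
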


In practical use, we are given a graph whose vertices are already assumed to be samples. Hence, we will need bootstrapping to obtain a consistent estimate. In particular, for the network architecture~\eqref{eqn:GCN}, the output $H^{(M)}$ is split into batches as usual. We will still use $u^{(M)}_{1},\ldots,u^{(M)}_{t_M}$ to denote a batch of vertices, which come from the given graph. For each batch, we sample (with replacement) uniformly each layer and obtain samples $u^{(l)}_{i}$, $i=1,\ldots,t_l$, $l=0,\ldots,M-1$. Such a procedure is equivalent to uniformly sampling the rows of $H^{(l)}$ for each $l$. Then, we obtain the batch loss
\begin{equation}\label{eqn:L.mini}
L_{\text{batch}}=\frac{1}{t_M}\sum_{i=1}^{t_M}g(H^{(M)}(u^{(M)}_i,:)),
\end{equation}
where, recursively,
\begin{equation}\label{eqn:L.mini.expand}
H^{(l+1)}(v,:) = \sigma\left(\frac{n}{t_l}\sum_{j=1}^{t_l} \hat{A}(v,u^{(l)}_j)H^{(l)}(u^{(l)}_j,:)W^{(l)}\right), \quad l=0,\ldots,M-1.
\end{equation}
Here, the $n$ inside the activation function $\sigma$ is the number of vertices in the given graph and is used to account for the normalization difference between the matrix form~\eqref{eqn:GCN} and the integral form~\eqref{eqn:h1}. The corresponding batch gradient may be straightforwardly obtained through applying the chain rule on each $H^{(l)}$. See Algorithm~\ref{algo:mini.batch}.

\begin{algorithm}[ht]
  \caption{FastGCN batched training (one epoch)}
  \label{algo:mini.batch}
  \begin{algorithmic}[1]
    \For{each batch}
    \State For each layer $l$, sample uniformly $t_l$ vertices $u^{(l)}_1,\ldots,u^{(l)}_{t_l}$
    \For{each layer $l$} \Comment{Compute batch gradient $\nabla L_{\text{batch}}$}
    \State If $v$ is sampled in the next layer, \[\nabla \tilde{H}^{(l+1)}(v,:) \gets \frac{n}{t_l}\sum_{j=1}^{t_l}\hat{A}(v,u^{(l)}_j)\nabla\Big\{H^{(l)}(u^{(l)}_j,:)W^{(l)}\Big\}\]
    \EndFor
    \State $W \gets W - \eta\nabla L_{\text{batch}}$
    \Comment{SGD step}
    \EndFor
  \end{algorithmic}
\end{algorithm}

%%%%%%%%%%%%%%%%%%%%%%%%%%%%%%%%%%%%%%%%%%%%%%%%%%%%%%%%%%%%%%%%%%%%%%%%%%%%%%%
\subsection{Variance Reduction}
As for any estimator, one is interested in improving its variance. Whereas computing the full variance is highly challenging because of nonlinearity in all the layers, it is possible to consider each single layer and aim at improving the variance of the embedding function before nonlinearity. Specifically, consider for the $l$th layer, the function $\tilde{h}^{(l+1)}_{t_{l+1}}(v)$ as an approximation to the convolution $\int\hat{A}(v,u)h^{(l)}_{t_l}(u)W^{(l)}\,dP(u)$. When taking $t_{l+1}$ samples $v=u^{(l+1)}_1,\ldots,u^{(l+1)}_{t_{l+1}}$, the sample average of $\tilde{h}^{(l+1)}_{t_{l+1}}(v)$ admits a variance that captures the deviation from the eventual loss contributed by this layer. Hence, we seek an improvement of this variance. Now that we consider each layer separately, we will do the following change of notation to keep the expressions less cumbersome:
\begin{center}
\renewcommand*{\arraystretch}{1.5}
\begin{tabular}{cccc}
\hline
& Function & Samples & Num.\ samples \\
\hline
Layer $l+1$; random variable $v$ &
$\tilde{h}^{(l+1)}_{t_{l+1}}(v) \to y(v)$ &
$u^{(l+1)}_i \to v_i$ &
$t_{l+1} \to s$ \\
Layer $l$; random variable $u$ &
$h^{(l)}_{t_l}(u)W^{(l)} \to x(u)$ &
$u^{(l)}_j \to u_j$ &
$t_l \to t$ \\
\hline
\end{tabular}
\end{center}
Under the joint distribution of $v$ and $u$, the aforementioned sample average is
\[
G := \frac{1}{s}\sum_{i=1}^{s} y(v_i) =
\frac{1}{s}\sum_{i=1}^{s} \left( \frac{1}{t}\sum_{j=1}^{t} \hat{A}(v_i,u_j)x(u_j) \right).
\]
First, we have the following result.

\begin{proposition}\label{thm:E.Var}
The variance of $G$ admits
\begin{equation}\label{eqn:Var}
\var\{G\}=R+\frac{1}{st}\iint \hat{A}(v,u)^2x(u)^2\,dP(u)\,dP(v),
\end{equation}
where
\[
R=\frac{1}{s}\left(1-\frac{1}{t}\right)\int e(v)^2\,dP(v)-\frac{1}{s}\left(\int e(v)\,dP(v)\right)^2 \quad\text{and}\quad
e(v)=\int\hat{A}(v,u)x(u)\,dP(u).
\]
\end{proposition}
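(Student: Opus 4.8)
The plan is to use the law of total variance, conditioning on the outer sample $v=(v_1,\dots,v_s)$; an equivalent but more tedious alternative would be to expand $G=\frac{1}{st}\sum_{i,j}\hat{A}(v_i,u_j)x(u_j)$ and compute $\mean\{G\}$ and $\mean\{G^2\}$ via a four-way case split on whether $i=i'$ and whether $j=j'$. Following the first route, I would view $G=\frac1s\sum_{i=1}^s y(v_i)$ as an average of the iid quantities $y(v_1),\dots,y(v_s)$, so that $\var\{G\}=\frac1s\var\{y(v_1)\}$, and then split the single-sample variance as
\[
\var\{y(v_1)\}=\mean_{v_1}\!\big[\var\{y(v_1)\mid v_1\}\big]+\var_{v_1}\!\big\{\mean[y(v_1)\mid v_1]\big\}.
\]

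Next I would evaluate the two pieces, treating the layer-$l$ embedding $x$ as a fixed function as the per-layer analysis does. For the outer term: since the $u_j$ are iid and independent of $v_1$, $\mean[y(v_1)\mid v_1]=\frac1t\sum_j\mean_{u_j}[\hat{A}(v_1,u_j)x(u_j)]=\int\hat{A}(v_1,u)x(u)\,dP(u)=e(v_1)$, hence $\var_{v_1}\{\mean[y(v_1)\mid v_1]\}=\int e(v)^2\,dP(v)-\big(\int e(v)\,dP(v)\big)^2$. For the inner term: conditionally on $v_1$ the summands $\hat{A}(v_1,u_j)x(u_j)$ are iid over $j$, so $\var\{y(v_1)\mid v_1\}=\frac1t\big(\int\hat{A}(v_1,u)^2x(u)^2\,dP(u)-e(v_1)^2\big)$, and averaging over $v_1$ gives $\frac1t\big(\iint\hat{A}(v,u)^2x(u)^2\,dP(u)\,dP(v)-\int e(v)^2\,dP(v)\big)$. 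Adding the two contributions, dividing by $s$, and regrouping the $\int e^2$ and $(\int e)^2$ pieces into the quantity $R$ yields exactly \eqref{eqn:Var}; this last step is pure bookkeeping, the coefficient of $\int e(v)^2\,dP(v)$ collapsing to $\frac1s-\frac1{st}=\frac1s(1-\frac1t)$.

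The step that actually needs attention is the very first one, $\var\{G\}=\frac1s\var\{y(v_1)\}$. The inner samples $u_1,\dots,u_t$ are \emph{shared} by all the outer points $v_i$, so the $y(v_i)$ are not literally independent; in general $\var\{G\}=\frac1s\var\{y(v_1)\}+\frac{s-1}{s}\cov\{y(v_1),y(v_2)\}$, and a short computation — using the mutual independence of $v_1,v_2,u_1,\dots,u_t$ and the same case split as above, with $f(u):=\int\hat{A}(v,u)\,dP(v)$ — gives
\[
\cov\{y(v_1),y(v_2)\}=\frac1t\Big(\int f(u)^2x(u)^2\,dP(u)-\big(\int e(v)\,dP(v)\big)^2\Big)=\frac1t\var\{f(u_1)x(u_1)\}\ge 0.
\]
So \eqref{eqn:Var} is precisely the variance of $G$ in the regime where each outer point is paired with its own fresh inner batch — equivalently, where this nonnegative cross-covariance is discarded — which is the idealization under which the subsequent per-layer variance-reduction discussion is carried out. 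Confirming one is in that regime, rather than any analytic estimate, is the only genuine subtlety; everything else is expanding second moments and collecting terms.
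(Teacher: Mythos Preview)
Your route is the paper's: compute $\mean[y(v)\mid v]=e(v)$ and $\var\{y(v)\mid v\}=\frac1t\big(\int\hat A(v,u)^2x(u)^2\,dP(u)-e(v)^2\big)$, then assemble via the law of total variance---the only cosmetic difference being that you first reduce to a single summand via $\var\{G\}=\frac1s\var\{y(v_1)\}$, whereas the paper conditions on $(v_1,\dots,v_s)$ and reduces afterward. Your closing caveat about the shared inner samples is well placed and in fact sharper than the paper, whose appendix expression for $\mean\big[\var\{\frac1s\sum_i y(v_i)\mid v_1,\dots,v_s\}\big]$ tacitly drops the same cross terms; the stated identity~\eqref{eqn:Var} is exactly the fresh-inner-batch variance you describe.
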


The variance~\eqref{eqn:Var} consists of two parts. The first part $R$ leaves little room for improvement, because the sampling in the $v$ space is not done in this layer. The second part (the double integral), on the other hand, depends on how the $u_j$'s in this layer are sampled. The current result~\eqref{eqn:Var} is the consequence of sampling $u_j$'s by using the probability measure $P$. One may perform importance sampling, altering the sampling distribution to reduce variance. Specifically, let $Q(u)$ be the new probability measure, where the $u_j$'s are drawn from.
%Rewriting the expectation of $y(v)$ to reflect the change
%\[
%y(v)=\int \hat{A}(v,u)x(u)\left(\frac{dP(u)}{dQ(u)}\right)\,dQ(u),
%\]
We hence define the new sample average approximation
\[
y_Q(v):= \frac{1}{t}\sum_{j=1}^t\hat{A}(v,u_j)x(u_j)\left(\left.\frac{dP(u)}{dQ(u)}\right\vert_{u_j}\right),
\qquad u_1,\ldots,u_t\sim Q,
\]
and the quantity of interest
\[
G_Q:=\frac{1}{s}\sum_{i=1}^sy_Q(v_i)
= \frac{1}{s}\sum_{i=1}^s\left(\frac{1}{t}\sum_{j=1}^t\hat{A}(v_i,u_j)x(u_j)\left(\left.\frac{dP(u)}{dQ(u)}\right\vert_{u_j}\right)\right).
\]
Clearly, the expectation of $G_Q$ is the same as that of $G$, regardless of the new measure $Q$. The following result gives the optimal $Q$.

\begin{theorem}\label{thm:E.Var2}
If
\begin{equation}\label{eqn:b}
dQ(u)=\frac{b(u)|x(u)|\,dP(u)}{\int b(u)|x(u)|\,dP(u)}
\quad\text{where}\quad
b(u)=\left[\int \hat{A}(v,u)^2\,dP(v)\right]^{\frac{1}{2}},
\end{equation}
then the variance of $G_Q$ admits
\begin{equation}\label{eqn:Var2}
\var\{G_Q\}=R+\frac{1}{st}\left[\int b(u)|x(u)|\,dP(u)\right]^2,
\end{equation}
where $R$ is defined in Proposition~\ref{thm:E.Var}. The variance is minimum among all choices of $Q$.
\end{theorem}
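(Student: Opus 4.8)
The plan is to proceed in two stages. First I would recompute $\var\{G_Q\}$ in a form parallel to Proposition~\ref{thm:E.Var}, isolating the single term that depends on the new measure $Q$; then I would minimize that term over all admissible $Q$ by a Cauchy--Schwarz argument. The key starting observation is that, under $u\sim Q$, the importance-weighted inner average $y_Q(v)$ remains an unbiased estimator of $e(v)=\int\hat{A}(v,u)x(u)\,dP(u)$, since $\mean_{u\sim Q}[\hat{A}(v,u)x(u)\,(dP/dQ)(u)]=\int\hat{A}(v,u)x(u)\,dP(u)$. Consequently, conditioning on the outer samples $v_1,\dots,v_s$ gives the same conditional mean $\frac1s\sum_i e(v_i)$ as in the unweighted case, so the part of the variance attributable to the $v$-sampling is unchanged and still equals $R$.

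Next I would apply the law of total variance, $\var\{G_Q\}=\mean[\var(G_Q\mid v_1,\dots,v_s)]+\var[\mean(G_Q\mid v_1,\dots,v_s)]$, exactly as in the proof of Proposition~\ref{thm:E.Var}. The second summand is $\frac1s\var\{e(v)\}$, contributing to $R$. For the first, conditional independence of the inner samples reduces it to $\frac{1}{st}\mean_{v\sim P}\big[\var_{u\sim Q}(\hat{A}(v,u)x(u)(dP/dQ)(u))\big]$, and the only change relative to Proposition~\ref{thm:E.Var} is that the conditional second moment picks up the importance weight, $\mean_{u\sim Q}[(\hat{A}(v,u)x(u)(dP/dQ)(u))^2]=\int\hat{A}(v,u)^2x(u)^2\,(dP(u)/dQ(u))\,dP(u)$. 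Collecting terms as before gives
\[
\var\{G_Q\}=R+\frac{1}{st}\iint\hat{A}(v,u)^2x(u)^2\,\frac{dP(u)}{dQ(u)}\,dP(u)\,dP(v),
\]
and swapping the order of integration together with the definition $b(u)^2=\int\hat{A}(v,u)^2\,dP(v)$ collapses the double integral to $\int b(u)^2x(u)^2\,(dP(u)/dQ(u))\,dP(u)$.

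It then remains to minimize $\int b(u)^2x(u)^2\,(dP/dQ)\,dP$ over probability measures $Q$. Writing $dQ=q\,dP$ with $q\ge0$ and $\int q\,dP=1$, the objective becomes $\int b(u)^2x(u)^2/q(u)\,dP(u)$, and Cauchy--Schwarz gives $\big(\int b(u)|x(u)|\,dP(u)\big)^2\le\big(\int b(u)^2x(u)^2/q(u)\,dP(u)\big)\big(\int q(u)\,dP(u)\big)$, with equality exactly when $q(u)\propto b(u)|x(u)|$; normalizing gives the measure~\eqref{eqn:b}. Since the left-hand side is a $Q$-independent lower bound attained at this choice, this $Q$ is optimal, and substituting it back produces the claimed variance~\eqref{eqn:Var2}. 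The two moment computations are routine; the only point needing care is the measure-theoretic bookkeeping of the change of measure --- one needs $Q$ absolutely continuous with respect to $P$ wherever $b(u)|x(u)|\neq0$ so that $dP/dQ$ is well defined (the value of $q$ on $\{b|x|=0\}$ being immaterial), plus enough integrability for Fubini and Cauchy--Schwarz to apply.
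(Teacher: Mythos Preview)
Your proposal is correct and follows essentially the same route as the paper's proof: both apply the law of total variance (conditioning on the outer samples $v_1,\dots,v_s$) to isolate the $Q$-dependent term $\frac{1}{st}\int b(u)^2x(u)^2\,(dP/dQ)\,dP$, and then optimize over $Q$. Your Cauchy--Schwarz argument for the minimization is in fact more explicit than the paper, which simply asserts that the optimal $dQ$ must be proportional to $b(u)|x(u)|\,dP(u)$ and verifies the resulting variance by substitution.
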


A drawback of defining the sampling distribution $Q$ in this manner is that it involves $|x(u)|$, which constantly changes during training. It corresponds to the product of the embedding matrix $H^{(l)}$ and the parameter matrix $W^{(l)}$. The parameter matrix is updated in every iteration; and the matrix product is expensive to compute. Hence, the cost of computing the optimal measure $Q$ is quite high.

As a compromise, we consider a different choice of $Q$, which involves only $b(u)$. The following proposition gives the precise definition. The resulting variance may or may not be smaller than~\eqref{eqn:Var}. In practice, however, we find that it is almost always helpful.

\begin{proposition}\label{thm:E.Var3}
If
\[
dQ(u)=\frac{b(u)^2\,dP(u)}{\int b(u)^2\,dP(u)}
\]
where $b(u)$ is defined in~\eqref{eqn:b}, then the variance of $G_Q$ admits
\begin{equation}\label{eqn:Var3}
\var\{G_Q\}=R+\frac{1}{st}\int b(u)^2\,dP(u)\int x(u)^2\,dP(u),
\end{equation}
where $R$ is defined in Proposition~\ref{thm:E.Var}.
\end{proposition}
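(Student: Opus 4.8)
The plan is to reuse the variance computation behind Proposition~\ref{thm:E.Var} and Theorem~\ref{thm:E.Var2}, specialized to this particular, cheaper-to-evaluate importance measure. The first step is to record the identity valid for \emph{any} admissible $Q$ (any probability measure with $dP\ll dQ$ and $\int b^2\,dP<\infty$): by the same argument that produced Proposition~\ref{thm:E.Var} (law of total variance, plus expanding the inner $u$-sum into its diagonal $j=j'$ part and its off-diagonal part), one obtains
\[
\var\{G_Q\}=R+\frac{1}{st}\iint \hat{A}(v,u)^2 x(u)^2\,\frac{dP(u)}{dQ(u)}\,dP(u)\,dP(v),
\]
with exactly the $R$ and $e$ of Proposition~\ref{thm:E.Var}. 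The ``$R$ part'' is untouched because the off-diagonal terms and the outer ($v$) average only ever see $\mean_{u\sim Q}[\hat{A}(v,u)x(u)\,\tfrac{dP}{dQ}(u)]=e(v)$ and its $P$-average, both unchanged by the reweighting---this is the unbiasedness of $G_Q$ noted just before Theorem~\ref{thm:E.Var2}. The diagonal term contributes $\tfrac1t\mean_{u\sim Q}\!\big[\hat{A}(v,u)^2x(u)^2(\tfrac{dP}{dQ})^2\big]$; using $(\tfrac{dP}{dQ})^2\,dQ=\tfrac{dP}{dQ}\,dP$, then averaging over $v$ and dividing by $s$, yields the double integral above. (As sanity checks, $Q=P$ recovers~\eqref{eqn:Var} and the optimal $Q$ of~\eqref{eqn:b} recovers~\eqref{eqn:Var2}.)

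The second step is the substitution. With $dQ(u)=b(u)^2\,dP(u)\big/\!\int b(w)^2\,dP(w)$ one has $\tfrac{dP(u)}{dQ(u)}=\big(\int b(w)^2\,dP(w)\big)\big/b(u)^2$, so the double integral becomes
\[
\Big(\textstyle\int b(w)^2\,dP(w)\Big)\iint \hat{A}(v,u)^2 x(u)^2\,b(u)^{-2}\,dP(u)\,dP(v).
\]
Carrying out the $v$-integral first and invoking $b(u)^2=\int \hat{A}(v,u)^2\,dP(v)$ collapses $\int \hat{A}(v,u)^2 b(u)^{-2}\,dP(v)=1$, leaving $\int x(u)^2\,dP(u)$. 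Hence the double-integral term equals $\tfrac{1}{st}\int b(u)^2\,dP(u)\int x(u)^2\,dP(u)$, which is~\eqref{eqn:Var3}.

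The only delicate point---and the step I would double-check most carefully---is the first one: confirming that the ``$R$ part'' of the variance really is insensitive to $Q$, i.e.\ that the off-diagonal cancellations used in the proof of Proposition~\ref{thm:E.Var} go through verbatim once the Radon--Nikodym weights are inserted. Beyond that there is only routine bookkeeping. A minor regularity remark: one needs $b(u)>0$ for $P$-almost every $u$ so that $Q$ is a genuine probability measure and $dP/dQ$ is finite $Q$-a.s.; this, together with $\int b^2\,dP=\iint \hat{A}(v,u)^2\,dP(v)\,dP(u)<\infty$, holds under the measurability/boundedness assumptions already in force.
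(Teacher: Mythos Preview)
Your proof is correct and follows the same route as the paper: obtain the general identity $\var\{G_Q\}=R+\tfrac{1}{st}\iint \hat{A}(v,u)^2x(u)^2\,(dP/dQ)(u)\,dP(u)\,dP(v)$ via the law-of-total-variance computation of Proposition~\ref{thm:E.Var} (this identity is exactly what the paper records inside the proof of Theorem~\ref{thm:E.Var2}), and then substitute the specific $Q$ and use $b(u)^2=\int\hat{A}(v,u)^2\,dP(v)$ to collapse the $v$-integral. If anything, your write-up is cleaner than the paper's printed proof of Proposition~\ref{thm:E.Var3}, which appears to contain a copy--paste slip (the displayed expression for $\hat{A}(v,u)x(u)\,dP/dQ$ there matches the optimal $Q$ of Theorem~\ref{thm:E.Var2} rather than the $Q$ of this proposition); the intended argument is precisely yours.
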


With this choice of the probability measure $Q$, the ratio $dQ(u)/dP(u)$ is proportional to $b(u)^2$, which is simply the integral of $\hat{A}(v,u)^2$ with respect to $v$. In practical use, for the network architecture~\eqref{eqn:GCN}, we define a probability mass function for all the vertices in the given graph:
\[
q(u)=\|\hat{A}(:,u)\|^2/\sum_{u'\in V}\|\hat{A}(:,u')\|^2,\quad u\in V
\]
and sample $t$ vertices $u_1,\ldots,u_t$ according to this distribution. From the expression of $q$, we see that it has no dependency on $l$; that is, the sampling distribution is the same for all layers. To summarize, the batch loss $L_{\text{batch}}$ in~\eqref{eqn:L.mini} now is recursively expanded as
\begin{equation}\label{eqn:L.mini.expand2}
H^{(l+1)}(v,:) = \sigma\left(\frac{1}{t_l}\sum_{j=1}^{t_l} \frac{\hat{A}(v,u^{(l)}_j)H^{(l)}(u^{(l)}_j,:)W^{(l)}}{q(u^{(l)}_j)}\right), \quad u^{(l)}_j \sim q, \quad l=0,\ldots,M-1.
\end{equation}
The major difference between~\eqref{eqn:L.mini.expand} and~\eqref{eqn:L.mini.expand2} is that the former obtains samples uniformly whereas the latter according to $q$. Accordingly, the scaling inside the summation changes. The corresponding batch gradient may be straightforwardly obtained through applying the chain rule on each $H^{(l)}$. See Algorithm~\ref{algo:mini.batch2}.

\begin{algorithm}[ht]
  \caption{FastGCN batched training (one epoch), improved version}
  \label{algo:mini.batch2}
  \begin{algorithmic}[1]
    \State For each vertex $u$, compute sampling probability $q(u)\propto\|\hat{A}(:,u)\|^2$
    \For{each batch}
    \State For each layer $l$, sample $t_l$ vertices $u^{(l)}_1,\ldots,u^{(l)}_{t_l}$ according to distribution $q$
    \For{each layer $l$} \Comment{Compute batch gradient $\nabla L_{\text{batch}}$}
    \State If $v$ is sampled in the next layer, \[\nabla \tilde{H}^{(l+1)}(v,:) \gets \frac{1}{t_l}\sum_{j=1}^{t_l}\frac{\hat{A}(v,u^{(l)}_j)}{q(u^{(l)}_j)}\nabla\Big\{H^{(l)}(u^{(l)}_j,:)W^{(l)}\Big\}\]
    \EndFor
    \State $W \gets W - \eta\nabla L_{\text{batch}}$
    \Comment{SGD step}
    \EndFor
  \end{algorithmic}
\end{algorithm}

%%%%%%%%%%%%%%%%%%%%%%%%%%%%%%%%%%%%%%%%%%%%%%%%%%%%%%%%%%%%%%%%%%%%%%%%%%%%%%%
\subsection{Inference}\label{sec:inference}
The sampling approach described in the preceding subsection clearly separates out test data from training. Such an approach is inductive, as opposed to transductive that is common for many graph algorithms. The essence is to cast the set of graph vertices as iid samples of a probability distribution, so that the learning algorithm may use the gradient of a consistent estimator of the loss to perform parameter update. 
%The estimator requires samples from only the training set, thanks to bootstrapping. 
Then, for inference, the embedding of a new vertex may be either computed by using the full GCN architecture~\eqref{eqn:GCN}, or approximated through sampling as is done in parameter learning. Generally, using the full architecture is more straightforward and easier to implement.

%%%%%%%%%%%%%%%%%%%%%%%%%%%%%%%%%%%%%%%%%%%%%%%%%%%%%%%%%%%%%%%%%%%%%%%%%%%%%%%
\subsection{Comparison with GraphSAGE}\label{sec:graphsage}
GraphSAGE~\citep{DBLP:journals/corr/HamiltonYL17} is a newly proposed architecture for generating vertex embeddings through aggregating neighborhood information. It shares the same memory bottleneck with GCN, caused by recursive neighborhood expansion. To reduce the computational footprint, the authors propose restricting the immediate neighborhood size for each layer. Using our notation for the sample size, if one samples $t_l$ neighbors for each vertex in the $l$th layer, then the size of the expanded neighborhood is, in the worst case, the product of the $t_l$'s. On the other hand, FastGCN samples vertices rather than neighbors in each layer. Then, the total number of involved vertices is at most the sum of the $t_l$'s, rather than the product. See experimental results in Section~\ref{sec:exp} for the order-of-magnitude saving in actual computation time.

%%%%%%%%%%%%%%%%%%%%%%%%%%%%%%%%%%%%%%%%%%%%%%%%%%%%%%%%%%%%%%%%%%%%%%%%%%%%%%%
\section{Experiments}\label{sec:exp}
We follow the experiment setup in \cite{DBLP:journals/corr/KipfW16} and \cite{DBLP:journals/corr/HamiltonYL17} to demonstrate the effective use of FastGCN, comparing with the original GCN model as well as GraphSAGE, on the following benchmark tasks: (1) classifying research topics using the Cora citation data set \citep{McCallum:2000:ACI:593959.594002}; (2) categorizing academic papers with the Pubmed database; and (3) predicting the community structure of a social network modeled with Reddit posts. These data sets are downloaded from the accompany websites of the aforementioned references. The graphs have increasingly more nodes and higher node degrees, representative of the large and dense setting under which our method is motivated. Statistics are summarized in Table~\ref{tb: dataset_fact}. We adjusted the training/validation/test split of Cora and Pubmed to align with the supervised learning scenario. Specifically, all labels of the training examples are used for training, as opposed to only a small portion in the semi-supervised setting~\citep{DBLP:journals/corr/KipfW16}. Such a split is  coherent with that of the other data set, Reddit, used in the work of GraphSAGE. Additional experiments using the original split of Cora and Pubmed are reported in the appendix.

%We follow the experimental setup in \cite{DBLP:journals/corr/KipfW16} and \cite{DBLP:journals/corr/HamiltonYL17} to test the performance of the proposed method on following benchmark tasks: (1) classifying academic papers into different subjects using the Pubmed citation dataset; (2) classifying citations using the Citeseer citation network; (3)  classifying research topics using the Cora computer science research papers citation dataset (\cite{McCallum:2000:ACI:593959.594002}), and (4) predicting which community different Reddit posts belong to using the Reddit social network data. Dataset statistics are summarized in Table~\ref{tb: dataset_fact}. In the citation network datasets, nodes are documents and edges are citation links. While for Reddit data, nodes are posts, edges are connection of posts meaning that the posts share a comment from the same user. The node label in Reddit data is the community that a post belongs to. 

\begin{table}[ht]
\begin{center}
\caption{Dataset Statistics}
\label{tb: dataset_fact}
\begin{tabular}{cccccc}
\hline
 Dataset &  Nodes &  Edges &  Classes &  Features &  Training/Validation/Test \\
\hline
Cora & $2,708$ & $5,429$ & $7$ & $1,433$ & $1,208/500/1,000$ \\
Pubmed & $19,717$ & $44,338$ & $3$ & $500$ & $18,217/500/1,000$ \\
Reddit & $232,965$ &$11,606,919$& $41$ &$602$ & $152,410/23,699/55,334$ \\
\hline
\end{tabular}
\end{center}
\end{table}

% Type
% Citation network
% Citation network
% Social network

Implementation details are as following. All networks (including those under comparison) contain two layers as usual. The codes of GraphSAGE and GCN are downloaded from the accompany websites and the latter is adapted for FastGCN. Inference with FastGCN is done with the full GCN network, as mentioned in Section~\ref{sec:inference}. Further details are contained in the appendix.

We first consider the use of sampling in FastGCN. The left part of Table~\ref{tab:pre_comp} (columns under ``Sampling'') lists the time and classification accuracy as the number of samples increases. For illustration purpose, we equalize the sample size on both layers. Clearly, with more samples, the per-epoch training time increases, but the accuracy (as measured by using micro F1 scores) also improves generally.

\begin{figure}[ht]
\begin{minipage}{0.45\linewidth}
\centering
  \captionof{table}{Benefit of precomputing $\hat{A}H^{(0)}$ for the input layer. Data set: Pubmed. Training time is in seconds, per-epoch (batch size 1024). Accuracy is measured by using micro F1 score.}
  \label{tab:pre_comp}
  \begin{tabular}{lcccc}
  \hline
   & \multicolumn{2}{c}{Sampling}& \multicolumn{2}{c}{Precompute}\\
        $t_1$ & Time & F1 & Time & F1\\\hline
    $5$ & $0.737$ & $0.859$ & $0.139$ & $0.849$ \\
    $10$ & $0.755$ & $0.863$ & $0.141$ & $0.870$ \\
    $25$ & $0.760$ & $0.873$ & $0.144$ & $0.879$ \\
    $50$ & $0.774$ & $0.864$ & $0.142$ & $0.880$ \\
  \hline
  \end{tabular}
\end{minipage}%
\begin{minipage}{0.55\linewidth}
\centering
\includegraphics[height=4.8cm]{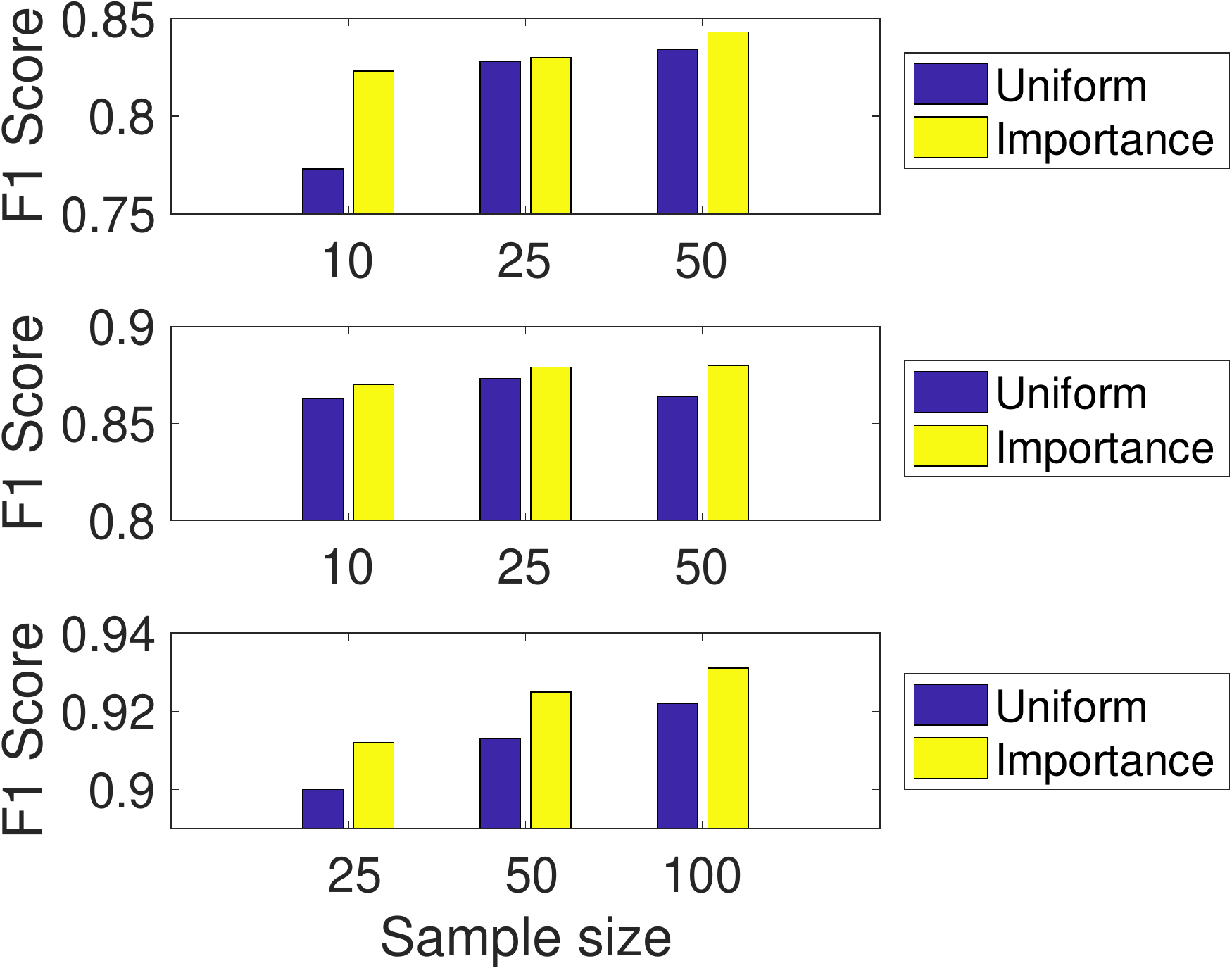}
\captionof{figure}{Prediction accuracy: uniform versus importance sampling. The three data sets from top to bottom are ordered the same as Table~\ref{tb: dataset_fact}.}
\label{fig:uni_inp}
\end{minipage}
\end{figure}

An interesting observation is that given input features $H^{(0)}$, the product $\hat{A}H^{(0)}$ in the bottom layer does not change, which means that the chained expansion of the gradient with respect to $W^{(0)}$ in the last step is a constant throughout training. Hence, one may precompute the product rather than sampling this layer to gain efficiency. The compared results are listed on the right part of Table~\ref{tab:pre_comp} (columns under ``Precompute''). One sees that the training time substantially decreases while the accuracy is  comparable. Hence, all the experiments that follow use precomputation. 

Next, we compare the sampling approaches for FastGCN: uniform and importance sampling. Figure~\ref{fig:uni_inp} summarizes the prediction accuracy under both approaches. It shows that importance sampling consistently yields higher accuracy than does uniform sampling. Since the altered sampling distribution (see Proposition~\ref{thm:E.Var3} and Algorithm~\ref{algo:mini.batch2}) is a compromise alternative of the optimal distribution that is impractical to use, this result suggests that the variance of the used sampling indeed is smaller than that of uniform sampling; i.e., the term~\eqref{eqn:Var3} stays closer to~\eqref{eqn:Var2} than does~\eqref{eqn:Var}. A possible reason is that $b(u)$ correlates with $|x(u)|$. Hence, later experiments will apply importance sampling.

We now demonstrate that the proposed method is significantly faster than the original GCN as well as GraphSAGE, while maintaining comparable prediction performance. See Figure~\ref{fig:time}. The bar heights indicate the per-batch training time, in the log scale. One sees that GraphSAGE is a substantial improvement of GCN for large and dense graphs (e.g., Reddit), although for smaller ones (Cora and Pubmed), GCN trains faster. FastGCN is the fastest, with at least an order of magnitude improvement compared with the runner up (except for Cora), and approximately two orders of magnitude speed up compared with the slowest. Here, the training time of FastGCN is with respect to the sample size that achieves the best prediction accuracy. As seen from the table on the right, this accuracy is highly comparable with the best of the other two methods.

\begin{figure}[ht]
\begin{minipage}{0.45\linewidth}
\centering
\includegraphics[height=3.8cm]{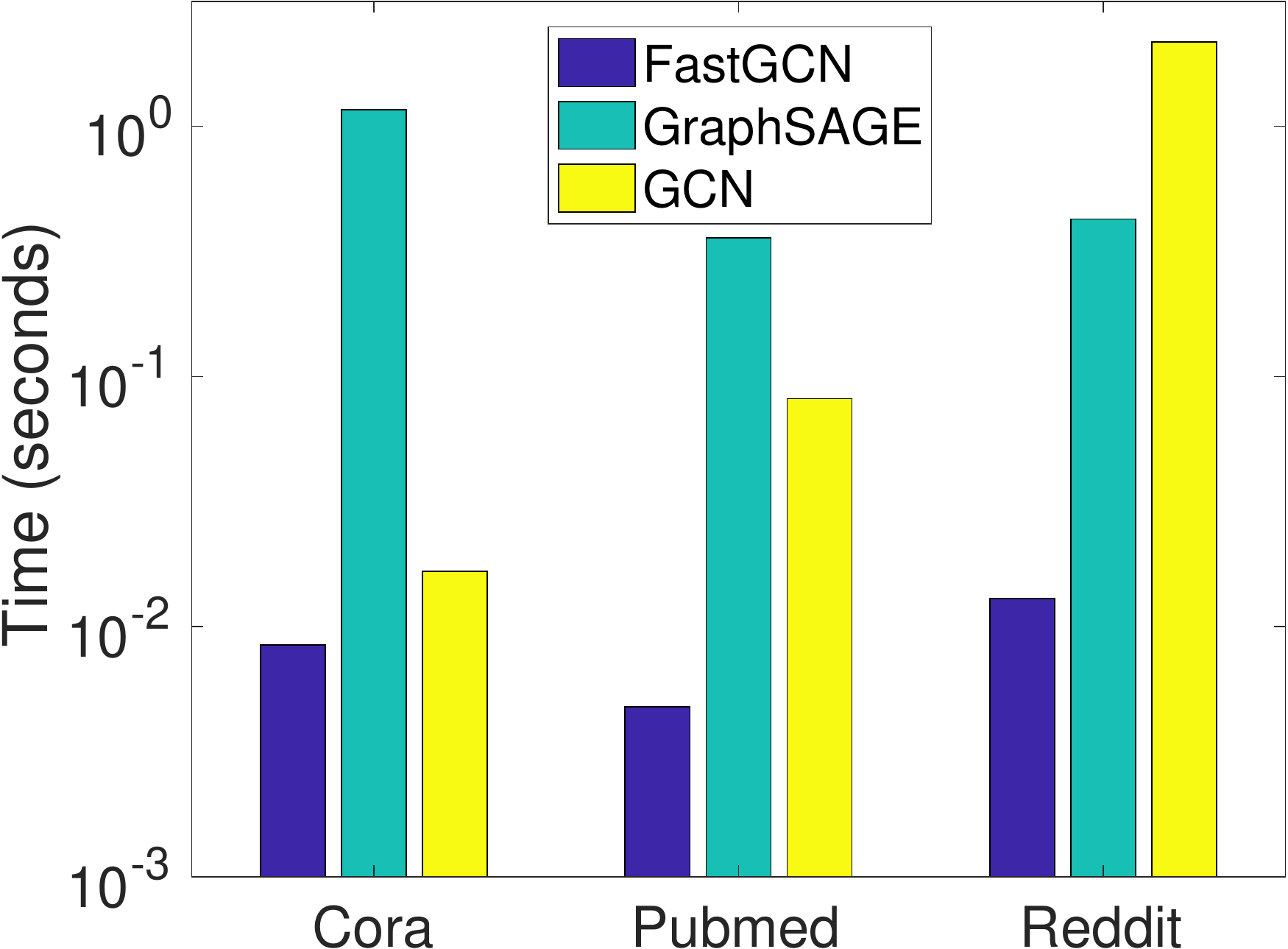}
\end{minipage}%
\begin{minipage}{0.55\linewidth}
\centering
\begin{tabular}{cccc}
\multicolumn{4}{c}{Micro F1 Score}\\
\hline
& Cora & Pubmed & Reddit\\
\hline
FastGCN          &0.850 &0.880 & 0.937\\
GraphSAGE-GCN    &0.829 &0.849 & 0.923\\
GraphSAGE-mean   &0.822 &0.888 & 0.946 \\
GCN (batched)    &0.851 &0.867 & 0.930\\
GCN (original)   &0.865 &0.875 & NA\\
\hline
\end{tabular}
\end{minipage}
\caption{Per-batch training time in seconds (left) and prediction accuracy (right). For timing, GraphSAGE refers to GraphSAGE-GCN in~\cite{DBLP:journals/corr/HamiltonYL17}. The timings of using other aggregators, such as GraphSAGE-mean, are similar. GCN refers to using batched learning, as opposed to the original version that is nonbatched; for more details of the implementation, see the appendix. The nonbatched version of GCN runs out of memory on the large graph Reddit. The sample sizes for FastGCN are 400, 100, and 400, respectively for the three data sets.}
\label{fig:time}
\end{figure}

%% The following is not shown
\if0
\begin{table*}[ht]
\centering
  \caption{Uniform Sampling versus Importance Sampling}
  \label{tab:uni_inp}
  \begin{tabular}{llccccccccc}
  \hline
  &&\multicolumn{9}{c}{Micro-F1 Score}\\\hline
 \multirow{2}{*}{} &  & \multicolumn{3}{c}{Reddit}& \multicolumn{3}{c}{Cora}& \multicolumn{3}{c}{Pubmed} \\
    \multicolumn{2}{c}{\# Samples in Layer $l=1$} & 25 & 50 & 100 & 25 & 50 & 100& 25 & 50 & 100\\\hline
\multicolumn{2}{c}{ Uniform Sampling} &   0.901&0.913  &0.922  & 0.828&0.834&0.843 &0.863&0.874&0.876\\
\multicolumn{2}{c}{Importance Sampling} & 0.911 &0.925  &0.930 &0.830 & 0.843&0.842& 0.879&0.880&0.880\\
\hline
  \end{tabular}
\end{table*}
\fi

In the discussion period, the authors of GraphSAGE offered an improved implementation of their codes and alerted that GraphSAGE was better suited for massive graphs. The reason is that for small graphs, the sample size (recalling that it is the product across layers) is comparable to the graph size and hence improvement is marginal; moreover, sampling overhead might then adversely affect the timing. For fair comparison, the authors of GraphSAGE kept the sampling strategy but improved the implementation of their original codes by eliminating redundant calculations of the sampled nodes. Now the per-batch training time of GraphSAGE compares more favorably on the smallest graph Cora; see Table~\ref{tab:GraphSAGE.new}. Note that this implementation does not affect large graphs (e.g., Reddit) and our observation of orders of magnitude faster training remains valid.

\begin{table}[ht]
\centering
\caption{Further comparison of per-batch training time (in seconds) with new implementation of GraphSAGE for small graphs. The new implementation is in PyTorch whereas the rest are in TensorFlow.}
\label{tab:GraphSAGE.new}
\begin{tabular}{cccc}
\hline
& Cora & Pubmed & Reddit\\
\hline
FastGCN                  &0.0084 & 0.0047 & 0.0129\\
GraphSAGE-GCN (old impl) &1.1630 & 0.3579 & 0.4260\\
GraphSAGE-GCN (new impl) &0.0380 & 0.3989 & NA \\
GCN (batched)            &0.0166 & 0.0815 & 2.1731\\
\hline
\end{tabular}
\end{table}

%%%%%%%%%%%%%%%%%%%%%%%%%%%%%%%%%%%%%%%%%%%%%%%%%%%%%%%%%%%%%%%%%%%%%%%%%%%%%%%
\section{Conclusions}
We have presented FastGCN, a fast improvement of the GCN model recently proposed by~\cite{DBLP:journals/corr/KipfW16} for learning graph embeddings. It generalizes transductive training to an inductive manner and also addresses the memory bottleneck issue of GCN caused by recursive expansion of neighborhoods. The crucial ingredient is a sampling scheme in the reformulation of the loss and the gradient, well justified through an alternative view of graph convoluntions in the form of integral transforms of embedding functions. We have compared the proposed method with additionally GraphSAGE~\citep{DBLP:journals/corr/HamiltonYL17}, a newly published work that also proposes using sampling to restrict the neighborhood size, although the two sampling schemes substantially differ in both algorithm and cost. Experimental results indicate that our approach is orders of magnitude faster than GCN and GraphSAGE, while maintaining highly comparable prediction performance with the two.

The simplicity of the GCN architecture allows for a natural interpretation of graph convolutions in terms of integral transforms. Such a view, yet, generalizes to many graph models whose formulations are based on first-order neighborhoods, examples of which include MoNet that applies to (meshed) manifolds~\citep{Monti2017}, as well as many message-passing neural networks (see e.g.,~\citet{Scarselli2009, Gilmer2017}). The proposed work elucidates the basic Monte Carlo ingredients for consistently estimating the integrals. When generalizing to other networks aforementioned, an additional effort is to investigate whether and how variance reduction may improve the estimator, a possibly rewarding avenue of future research.

%%%%%%%%%%%%%%%%%%%%%%%%%%%%%%%%%%%%%%%%%%%%%%%%%%%%%%%%%%%%%%%%%%%%%%%%%%%%%%%
\bibliography{iclr2018_conference}
\bibliographystyle{iclr2018_conference}

%%%%%%%%%%%%%%%%%%%%%%%%%%%%%%%%%%%%%%%%%%%%%%%%%%%%%%%%%%%%%%%%%%%%%%%%%%%%%%%
\appendix
\section{Proofs}
\begin{proof}[Proof of Theorem~\ref{thm:L}]
Because the samples $u^{(0)}_j$ are iid, by the strong law of large numbers,
\[
\tilde{h}^{(1)}_{t_{1}}(v) = \frac{1}{t_0}\sum_{j=1}^{t_0} \hat{A}(v,u^{(0)}_j)h^{(0)}(u^{(0)}_j)W^{(0)}
\]
converges almost surely to $\tilde{h}^{(1)}(v)$. Then, because the activation function $\sigma$ is continuous, the continuous mapping theorem implies that $h^{(1)}_{t_{1}}(v)=\sigma(\tilde{h}^{(1)}_{t_{1}}(v))$ converges almost surely to $h^{(1)}(v)=\sigma(\tilde{h}^{(1)}(v))$. Thus, $\int \hat{A}(v,u)h^{(1)}_{t_1}(u)W^{(1)}\,dP(u)$ converges almost surely to $\tilde{h}^{(2)}(v) = \int \hat{A}(v,u)h^{(1)}(u)W^{(1)}\,dP(u)$, where note that the probability space is with respect to the $0$th layer and hence has nothing to do with that of the variable $u$ or $v$ in this statement. Similarly,
\[
\tilde{h}^{(2)}_{t_2}(v) = \frac{1}{t_1}\sum_{j=1}^{t_1} \hat{A}(v,u^{(1)}_j)h^{(1)}_{t_1}(u^{(1)}_j)W^{(1)}
\]
converges almost surely to $\int \hat{A}(v,u)h^{(1)}_{t_1}(u)W^{(1)}\,dP(u)$ and thus to $\tilde{h}^{(2)}(v)$. A simple induction completes the rest of the proof.
\end{proof}

%%%%%%%%%%%%%%%%%%%%%%%%%%%%%%%%%%%%%%%%%%%%%%%%%%%%%%%%%%%%%%%%%%%%%%%%%%%%%%%
\begin{proof}[Proof of Proposition~\ref{thm:E.Var}]
Conditioned on $v$, the expectation of $y(v)$ is
\begin{equation}\label{eqn:proof.E1}
\mean[y(v)|v]=\int \hat{A}(v,u)x(u)\,dP(u) = e(v),
\end{equation}
and the variance is $1/t$ times that of $\hat{A}(v,u)x(u)$, i.e.,
\begin{equation}\label{eqn:proof.Var1}
\var\{y(v)|v\}=
\frac{1}{t}\left(\int \hat{A}(v,u)^2x(u)^2\,dP(u)-e(v)^2\right).
\end{equation}
Instantiating~\eqref{eqn:proof.E1} and~\eqref{eqn:proof.Var1} with iid samples $v_1,\ldots,v_s\sim P$ and taking variance and expectation in the front, respectively, we obtain
\[
\var\left\{\mean\left[\left.\frac{1}{s}\sum_{i=1}^sy(v_i)\right\vert v_1,\ldots,v_s \right]\right\}=
\var\left\{\frac{1}{s}\sum_{i=1}^s e(v_i)\right\}=
\frac{1}{s}\int e(v)^2\,dP(v) - \frac{1}{s}\left(\int e(v)\,dP(v)\right)^2,
\]
and
\[
\mean\left[\var\left\{\left.\frac{1}{s}\sum_{i=1}^sy(v_i)\right\vert v_1,\ldots,v_s \right\}\right]=
\frac{1}{st}\iint \hat{A}(v,u)^2x(u)^2\,dP(u)\,dP(v)-\frac{1}{st}\int e(v)^2\,dP(v).
\]
Then, applying the law of total variance
\[
\var\left\{\frac{1}{s}\sum_{i=1}^sy(v_i)\right\}=
\var\left\{\mean\left[\left.\frac{1}{s}\sum_{i=1}^sy(v_i)\right\vert v_1,\ldots,v_s \right]\right\}+
\mean\left[\var\left\{\left.\frac{1}{s}\sum_{i=1}^sy(v_i)\right\vert v_1,\ldots,v_s \right\}\right],
\]
we conclude the proof.
\end{proof}

%%%%%%%%%%%%%%%%%%%%%%%%%%%%%%%%%%%%%%%%%%%%%%%%%%%%%%%%%%%%%%%%%%%%%%%%%%%%%%%
\begin{proof}[Proof of Theorem~\ref{thm:E.Var2}]
Conditioned on $v$, the variance of $y_Q(v)$ is $1/t$ times that of
\[
\hat{A}(v,u)x(u)\frac{dP(u)}{dQ(u)} \quad\text{(where $u\sim Q$)},
\]
i.e.,
\[
\var\{y_Q(v)|v\}=
\frac{1}{t}\left(\int\frac{\hat{A}(v,u)^2x(u)^2dP(u)^2}{dQ(u)}-e(v)^2\right).
\]
Then, following the proof of Proposition~\ref{thm:E.Var}, the overall variance is
\[
\var\{G_Q\}=R+\frac{1}{st}\iint\frac{\hat{A}(v,u)^2x(u)^2\,dP(u)^2\,dP(v)}{dQ(u)}
=R+\frac{1}{st}\int\frac{b(u)^2x(u)^2dP(u)^2}{dQ(u)}.
\]
Hence, the optimal $dQ(u)$ must be proportional to $b(u)|x(u)|\,dP(u)$. Because it also must integrate to unity, we have
\[
dQ(u)=\frac{b(u)|x(u)|\,dP(u)}{\int b(u)|x(u)|\,dP(u)},
\]
in which case
\[
\var\{G_Q\}=R+\frac{1}{st}\left[\int b(u)|x(u)|\,dP(u)\right]^2.
\]
\end{proof}

%%%%%%%%%%%%%%%%%%%%%%%%%%%%%%%%%%%%%%%%%%%%%%%%%%%%%%%%%%%%%%%%%%%%%%%%%%%%%%%
\begin{proof}[Proof of Proposition~\ref{thm:E.Var3}]
Conditioned on $v$, the variance of $y_Q(v)$ is $1/t$ times that of
\[
\hat{A}(v,u)x(u)\frac{dP(u)}{dQ(u)}=
\frac{\hat{A}(v,u)\sgn(x(u))}{b(u)}\int b(u)|x(u)|\,dP(u),
\]
i.e.,
\[
\var\{y_Q(v)|v\}=
\frac{1}{t}\left(\left[\int b(u)|x(u)|\,dP(u)\right]^2\int\frac{\hat{A}(v,u)^2}{b(u)^2}\,dQ(u)-e(v)^2\right).
\]
The rest of the proof follows that of Proposition~\ref{thm:E.Var}.
\end{proof}

%%%%%%%%%%%%%%%%%%%%%%%%%%%%%%%%%%%%%%%%%%%%%%%%%%%%%%%%%%%%%%%%%%%%%%%%%%%%%%%

\section{Additional Experiment Details}

\subsection{Baselines}
{\bf GCN}: The original GCN cannot work on very large graphs (e.g., Reddit). So we modified it into a batched version by simply removing the sampling in our FastGCN (i.e., using all the nodes instead of sampling a few in each batch). 
For relatively small graphs (Cora and Pubmed), we also compared the results with the original GCN.

{\bf GraphSAGE}: For training time comparison, we use GraphSAGE-GCN that employs GCN as the aggregator. It is also the fastest version among all choices of the aggregators. For accuracy comparison, we also compared with GraphSAGE-mean. We used the codes from \url{https://github.com/williamleif/GraphSAGE}. Following the setting of~\cite{DBLP:journals/corr/HamiltonYL17}, we use two layers with neighborhood sample sizes $S_1 = 25$ and $S_2=10$. For fair comparison with our method, the batch size is set to be the same as FastGCN, and the hidden dimension is 128.

\subsection{Experiment Setup}
{\bf Datasets:}
The Cora and Pubmed data sets are from \url{https://github.com/tkipf/gcn}. As we explained in the paper, we kept the validation index and test index unchanged but changed the training index to use all the remaining nodes in the graph. The Reddit data is from \url{http://snap.stanford.edu/graphsage/}.

{\bf Experiment Setting:}
We preformed hyperparameter selection for the learning rate and model dimension. We swept learning rate in the set \{0.01, 0.001, 0.0001\}. The hidden dimension of FastGCN for Reddit is set as 128, and for the other two data sets, it is 16. The batch size is 256 for Cora and Reddit, and 1024 for Pubmed. Dropout rate is set as 0. We use Adam as the optimization method for training. In the test phase, we use the trained parameters and all the graph nodes instead of sampling. For more details please check our codes in a temporary git repository \url{https://github.com/matenure/FastGCN}.
%\url{https://github.com/submit1/fastgcn}.

{\bf Hardware:}
Running time is compared on a single machine with 4-core 2.5 GHz Intel Core i7, and 16G RAM.

%%%%%%%%%%%%%%%%%%%%%%%%%%%%%%%%%%%%%%%%%%%%%%%%%%%%%%%%%%%%%%%%%%%%%%%%%%%%%%%

\section{Additional Experiments}

\subsection{Training Time Comparison}
Figure~\ref{fig:time} in the main text compares the per-batch training time for different methods. Here, we list the total training time for reference. It is impacted by the convergence of SGD, whose contributing factors include learning rate, batch size, and sample size. See Table~\ref{tab:total.training.time}. Although the orders-of-magnitude speedup of per-batch time is slightly weakened by the convergence speed, one still sees a substantial advantage of the proposed method in the overall training time. Note that even though the original GCN trains faster than the batched version, it does not scale because of memory limitation. Hence, a fair comparison should be gauged with the batched version. We additionally show in Figure~\ref{fig:accuracy.vs.time} the evolution of prediction accuracy as training progresses.

\begin{table}[ht]
\centering
\caption{Total training time (in seconds).}
\label{tab:total.training.time}
\begin{tabular}{crrr}
\hline
& \multicolumn{1}{c}{Cora} & \multicolumn{1}{c}{Pubmed} & \multicolumn{1}{c}{Reddit}\\
\hline
FastGCN          &  2.7 &  15.5 &   638.6\\
GraphSAGE-GCN    & 72.4 & 259.6 &  3318.5\\
GCN (batched)    &  6.9 & 210.8 & 58346.6\\
GCN (original)   &  1.7 &  21.4 & \multicolumn{1}{c}{NA}\\
\hline
\end{tabular}
\end{table}

\begin{figure}[ht]
\centering
\begin{minipage}{0.33\linewidth}
\centering
\includegraphics[width=\linewidth]{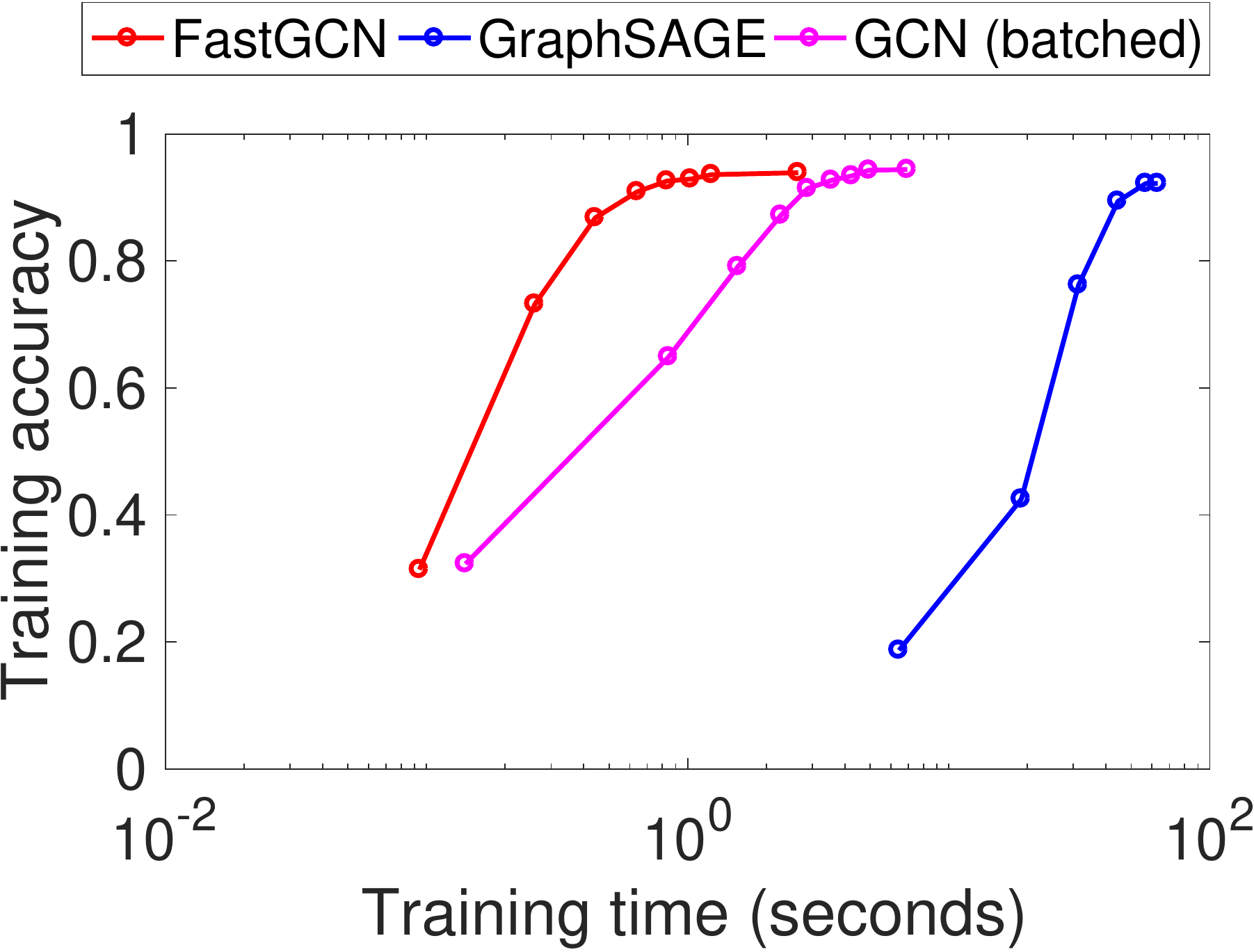}
\end{minipage}%
\begin{minipage}{0.33\linewidth}
\centering
\includegraphics[width=\linewidth]{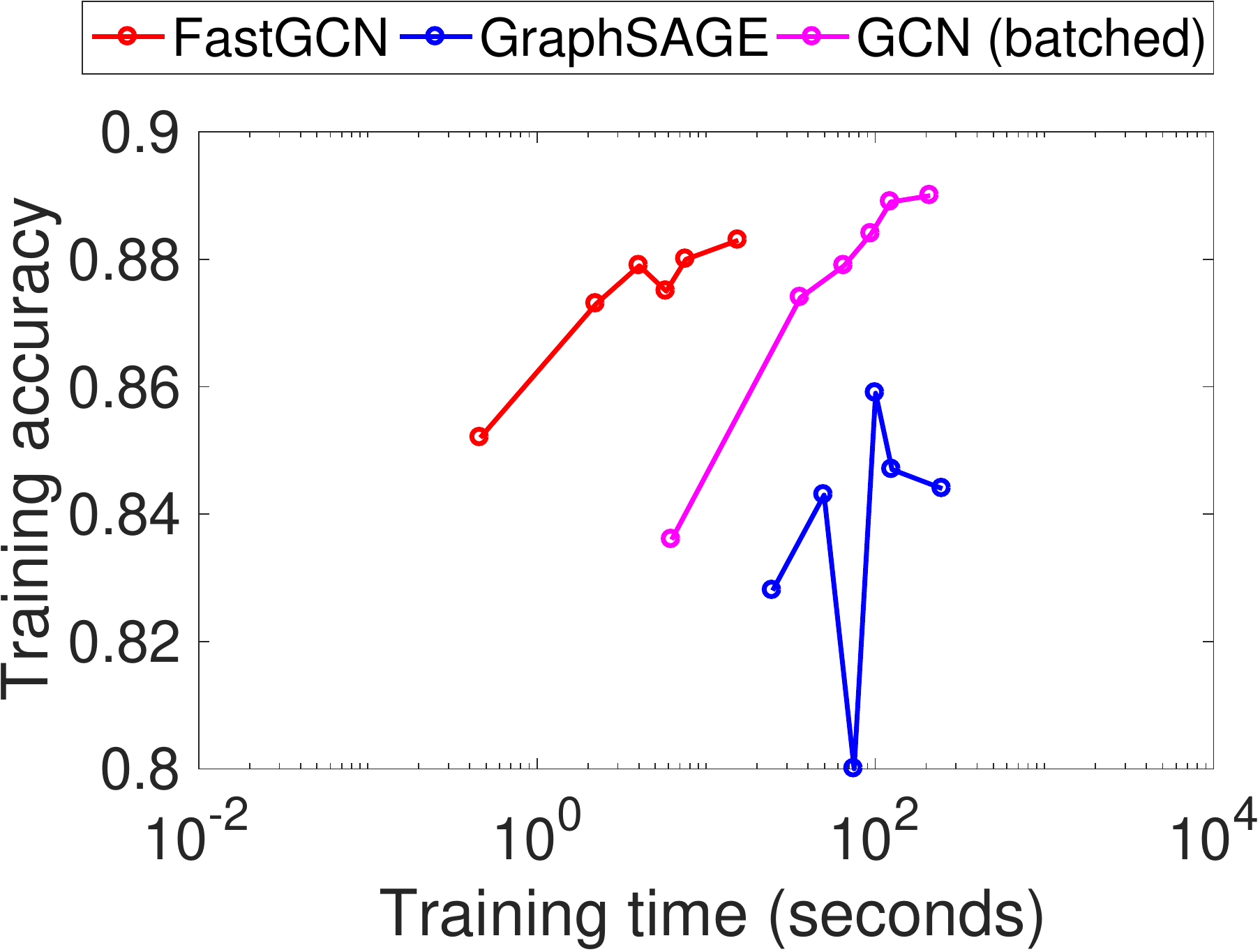}
\end{minipage}%
\begin{minipage}{0.33\linewidth}
\centering
\includegraphics[width=\linewidth]{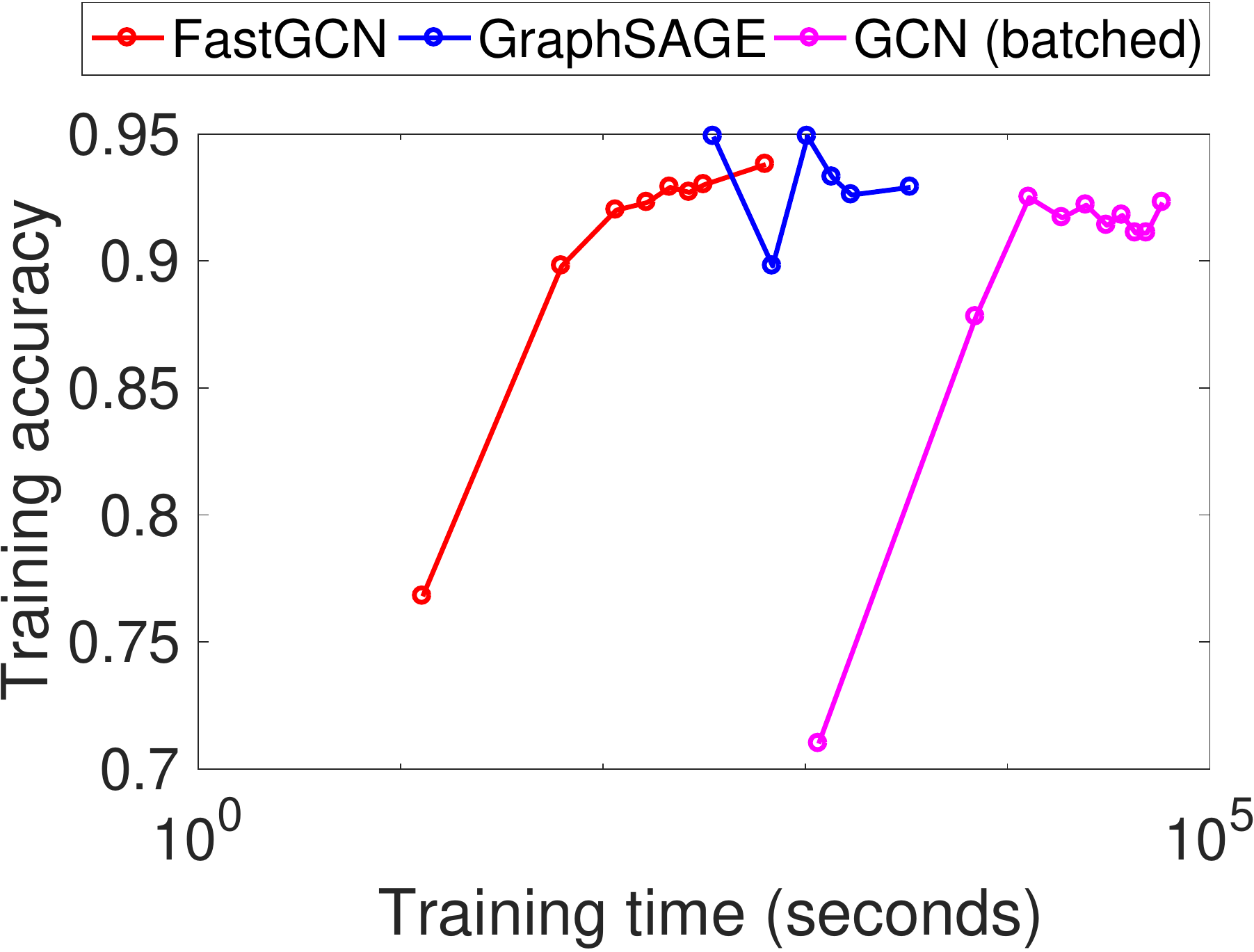}
\end{minipage}\\
\centering
\begin{minipage}{0.33\linewidth}
\centering
\includegraphics[width=\linewidth]{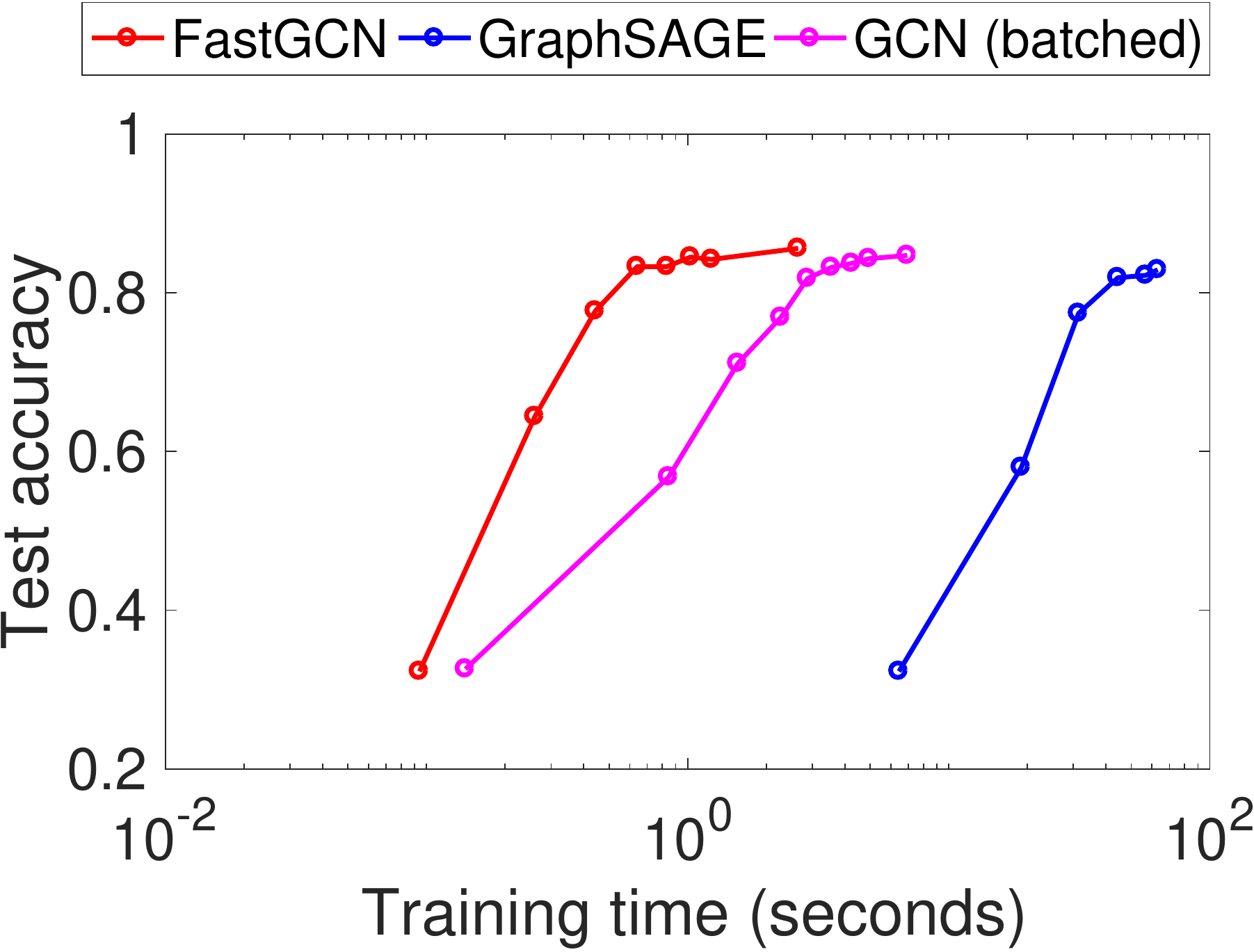}
\end{minipage}%
\begin{minipage}{0.33\linewidth}
\centering
\includegraphics[width=\linewidth]{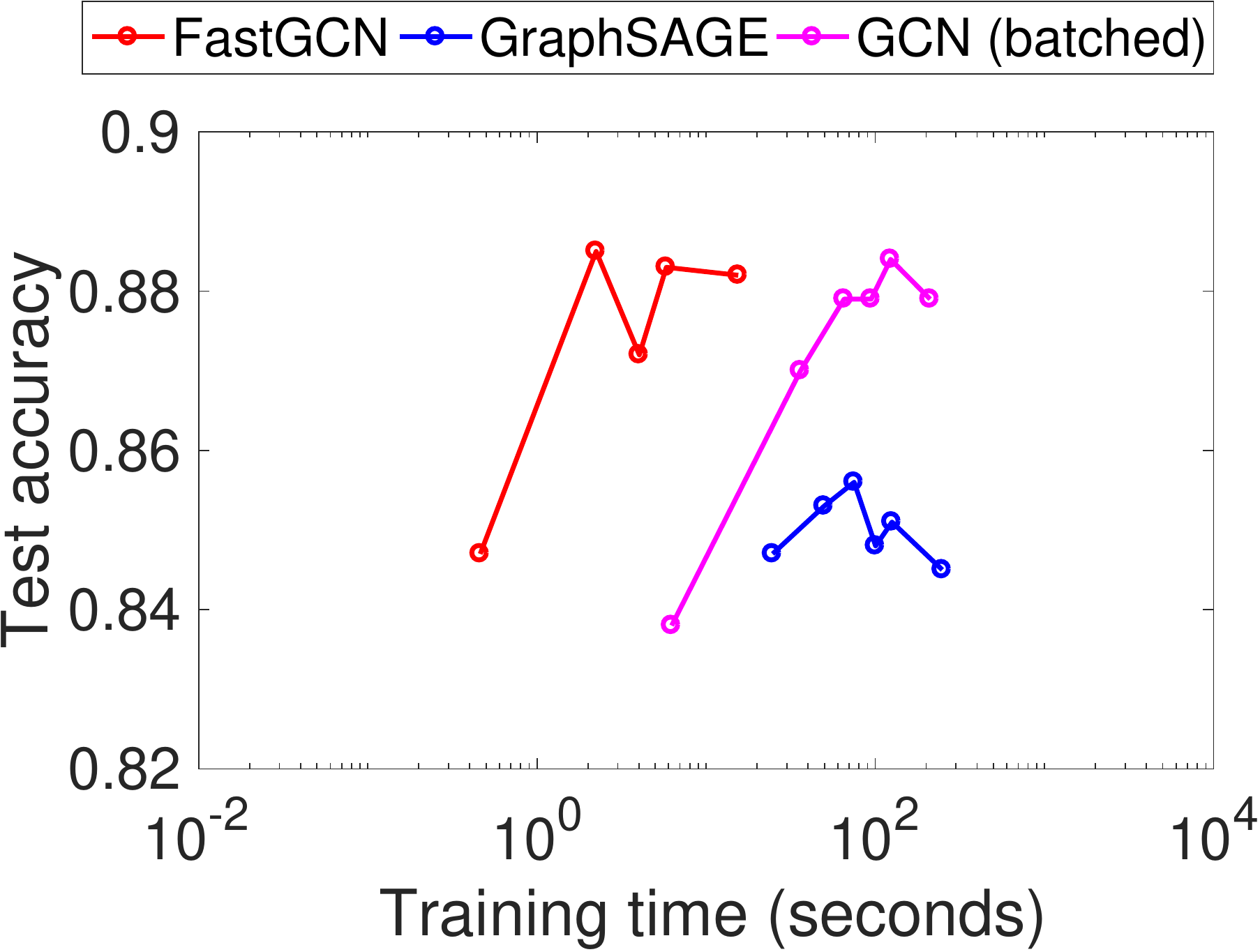}
\end{minipage}%
\begin{minipage}{0.33\linewidth}
\centering
\includegraphics[width=\linewidth]{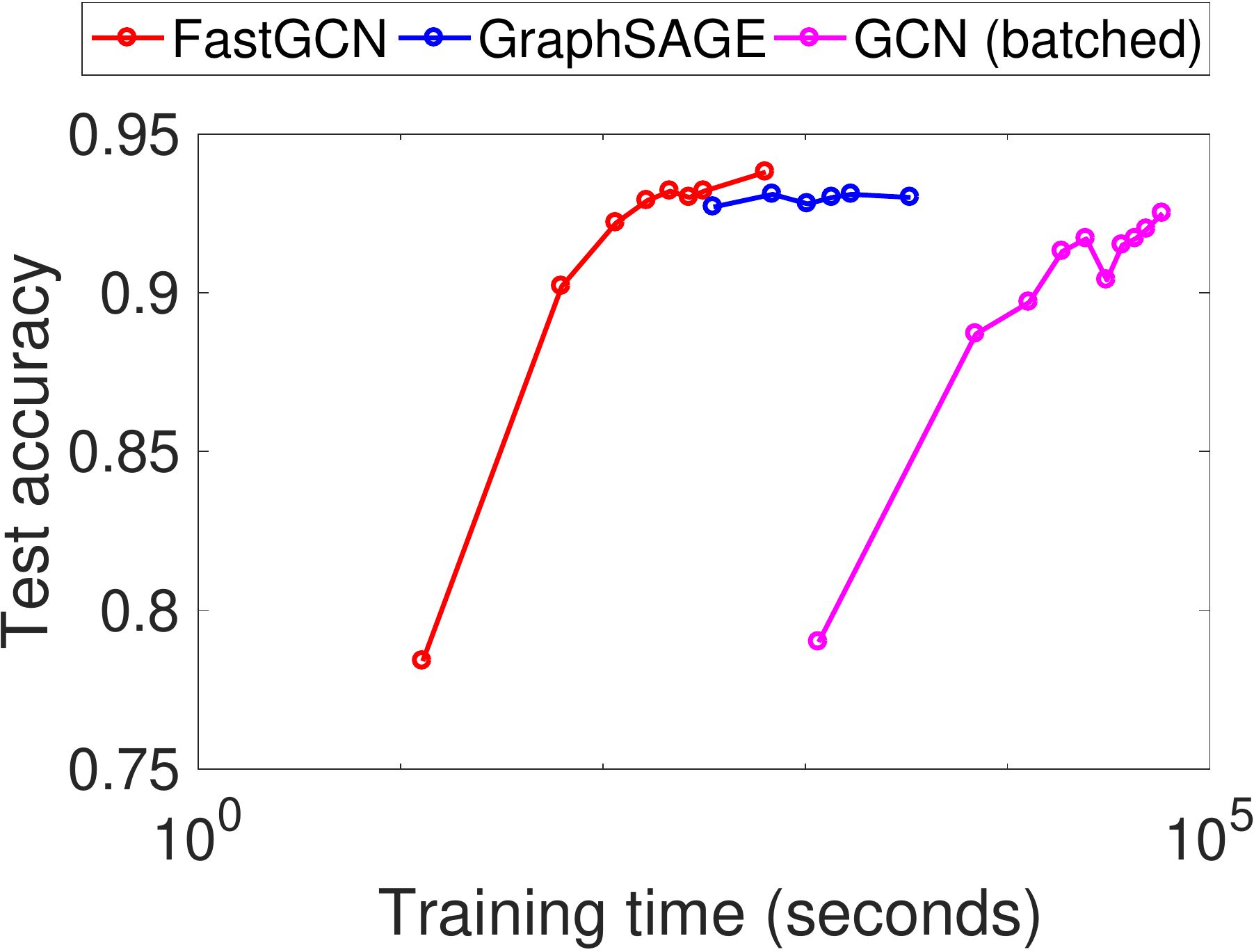}
\end{minipage}%
\caption{Training/test accuracy versus training time. From left to right, the data sets are Cora, Pubmed, and Reddit, respectively.}
\label{fig:accuracy.vs.time}
\end{figure}

%%%%%%%%%%%%%%%%%%%%%%%%%%%%%%%%%%%%%%%%%%%%%%%%%%%%%%%%%%%%%%%%%%%%%%%%%%%%%%%

\subsection{Original Data Split for Cora and Pubmed}

As explained in Section~\ref{sec:exp}, we increased the number of labels used for training in Cora and Pubmed, to align with the supervised learning setting of Reddit. For reference, here we present results by using the original data split with substantially fewer training labels. We also fork a separate version of FastGCN, called FastGCN-transductive, that uses both training and test data for learning. See Table~\ref{tab:original.split}.

The results for GCN are consistent with those reported by~\cite{DBLP:journals/corr/KipfW16}. Because labeled data are scarce, the training of GCN is quite fast. FastGCN beats it only on Pubmed. The accuracy results of FastGCN are inferior to GCN, also because of the limited number of training labels. The transductive version FastGCN-transductive matches the accuracy of that of GCN. The results for GraphSAGE are curious. We suspect that the model significantly overfits the data, because perfect training accuracy (i.e., 1) is attained.

One may note a subtlety that the training of GCN (original) is slower than what is reported in Table~\ref{tab:total.training.time}, even though fewer labels are used here. The reason is that we adopt the same hyperparameters as in~\cite{DBLP:journals/corr/KipfW16} to reproduce the F1 scores of their work, whereas for Table~\ref{tab:total.training.time}, a better learning rate is found that boosts the performance on the new split of the data, in which case GCN (original) converges faster.

\begin{table}[ht]
\centering
\caption{Total training time and test accuracy for Cora and Pubmed, original data split. Time is in seconds.}
\label{tab:original.split}
\begin{tabular}{crcrc}
\hline
& \multicolumn{2}{c}{Cora}& \multicolumn{2}{c}{Pubmed}\\
& \multicolumn{1}{c}{Time} & F1 & \multicolumn{1}{c}{Time} & F1\\\hline
FastGCN              &   2.52 & 0.723 &  0.97 & 0.721\\
FastGCN-transductive &   5.88 & 0.818 &  8.97 & 0.776\\
GraphSAGE-GCN        & 107.95 & 0.334 & 39.34 & 0.386\\
GCN (original)       &   2.18 & 0.814 & 32.65 & 0.795\\
\hline
\end{tabular}
\end{table}

%%%%%%%%%%%%%%%%%%%%%%%%%%%%%%%%%%%%%%%%%%%%%%%%%%%%%%%%%%%%%%%%%%%%%%%%%%%%%%%
\section{Convergence}

Strictly speaking, the training algorithms proposed in Section~\ref{sec:sampling} do not precisely follow the existing theory of SGD, because the gradient estimator, though consistent, is biased. In this section, we fill the gap by deriving a convergence result. Similar to the case of standard SGD where the convergence rate depends on the properties of the objective function, here we analyze only a simple case; a comprehensive treatment is out of the scope of the present work. For convenience, we will need a separate system of notations and the same notations appearing in the main text may bear a different meaning here. We abbreviate ``with probability one'' to ``\wpone'' for short.

We use $f(x)$ to denote the objective function and assume that it is differentiable. Differentiability is not a restriction because for the nondifferentiable case, the analysis that follows needs simply change the gradient to the subgradient. The key assumption made on $f$ is that it is $l$-strictly convex; that is, there exists a positive real number $l$ such that
\begin{equation}\label{eqn:strictly.convex}
f(x)-f(y) \ge \langle\nabla f(y),x-y\rangle+\frac{l}{2}\|x-y\|^2,
\end{equation}
for all $x$ and $y$. We use $g$ to denote the gradient estimator. Specifically, denote by $g(x;\xi_N)$, with $\xi_N$ being a random variable, a strongly consistent estimator of $\nabla f(x)$; that is,
\[
\lim_{N\to\infty} g(x;\xi_N) = \nabla f(x) \quad\text{\wpone}.
\]
Moreover, we consider the SGD update rule
\begin{equation}\label{eqn:update.rule}
x_{k+1} = x_k - \gamma_k \,\, g(x_k;\xi_N^{(k)}),
\end{equation}
where $\xi_N^{(k)}$ is an indepedent sample of $\xi_N$ for the $k$th update. The following result states that the update converges on the order of $O(1/k)$.

\begin{theorem}
Let $x^*$ be the \textup{(}global\textup{)} minimum of $f$ and assume that $\|\nabla f(x)\|$ is uniformly bounded by some constant $G>0$. If $\gamma_k=(lk)^{-1}$, then there exists a sequence $B_k$ with
\[
B_k \le \frac{\max\{\|x_1-x^*\|^2,\,\,G^2/l^2\}}{k}
\]
such that $\|x_k-x^*\|^2\to B_k$ \wpone.
\end{theorem}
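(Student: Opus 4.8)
\emph{Reading and plan.} As with Theorem~\ref{thm:L}, I read the limit as being over the sample size $N$ inside the estimator $g$; this is also what makes the statement meaningful, since for fixed $N$ the estimator $g$, though consistent, is biased and the iterates cannot converge to $x^*$ itself. So, for each fixed iteration index $k$, I would show that the random iterate $x_k$ produced by \eqref{eqn:update.rule} — a function of $\xi_N^{(1)},\dots,\xi_N^{(k-1)}$ — converges \wpone\ as $N\to\infty$ to the exact gradient-descent iterate $\bar x_k$ given by $\bar x_1=x_1$, $\bar x_{j+1}=\bar x_j-\gamma_j\nabla f(\bar x_j)$, and then define $B_k:=\|\bar x_k-x^*\|^2$. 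The almost-sure convergence is an induction on $k$ reusing the recursive law-of-large-numbers / continuous-mapping scheme of the proof of Theorem~\ref{thm:L}: the base case is trivial ($x_1$ deterministic); for the step, $x_{k+1}=x_k-\gamma_k\,g(x_k;\xi_N^{(k)})$, the inductive hypothesis gives $x_k\to\bar x_k$ \wpone, strong consistency of $g$ upgrades this to $g(x_k;\xi_N^{(k)})\to\nabla f(\bar x_k)$ \wpone, hence $x_{k+1}\to\bar x_{k+1}$ \wpone, and continuity of the norm gives $\|x_k-x^*\|^2\to B_k$ \wpone.

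\emph{The $O(1/k)$ bound on $B_k$.} This is the classical strongly-convex gradient-descent estimate, now purely deterministic. Expanding,
\[
B_{k+1}=\bigl\|\bar x_k-x^*-\gamma_k\nabla f(\bar x_k)\bigr\|^2
= B_k-2\gamma_k\langle\nabla f(\bar x_k),\bar x_k-x^*\rangle+\gamma_k^2\|\nabla f(\bar x_k)\|^2 .
\]
Adding \eqref{eqn:strictly.convex} with $(x,y)=(\bar x_k,x^*)$ and with $(x,y)=(x^*,\bar x_k)$, and using $\nabla f(x^*)=0$, gives $\langle\nabla f(\bar x_k),\bar x_k-x^*\rangle\ge l\,B_k$; with $\|\nabla f(\bar x_k)\|\le G$ and $\gamma_k=(lk)^{-1}$ this becomes $B_{k+1}\le(1-2/k)B_k+G^2/(l^2k^2)$. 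With $C:=\max\{\|x_1-x^*\|^2,\,G^2/l^2\}$, a short induction gives $B_k\le C/k$: for $k\ge 2$ (so $1-2/k\ge0$), $B_k\le C/k$ implies $B_{k+1}\le(1-2/k)(C/k)+G^2/(l^2k^2)\le(1-2/k)(C/k)+C/k^2=C(k-1)/k^2\le C/(k+1)$, while the low-index cases are handled directly via the a priori bound $\|x-x^*\|\le G/l$ for every $x$ (immediate from $l\|x-x^*\|^2\le\langle\nabla f(x),x-x^*\rangle\le G\|x-x^*\|$), which already gives $B_k\le G^2/l^2$ for all $k$.

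\emph{Where the difficulty is.} The subtle step is the inductive step of the first part: pointwise strong consistency of $g(x;\cdot)$ together with continuity of $\nabla f$ does not by itself give $g(x_k;\xi_N^{(k)})\to\nabla f(\bar x_k)$, because the evaluation point $x_k=x_k^{(N)}$ is random and merely converges to $\bar x_k$. One needs $g(\cdot;\xi_N)\to\nabla f$ to hold locally uniformly near the trajectory (a uniform strong law, or a function-space continuous-mapping statement) — the quantitative analogue of the comment in the proof of Theorem~\ref{thm:L} about which probability space each successive limit refers to. Establishing that uniformity, or restricting to the regime where it is automatic, is the real content; the rate recursion above is otherwise routine.
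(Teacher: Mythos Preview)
Your approach is essentially the paper's, reorganized in a cleaner way. The paper expands $\|x_{k+1}-x^*\|^2$, passes to the limit in $N$ ``conditioned on $x_k$'' to get a deterministic quantity $C_k$, bounds $C_k\le(1-2/k)\|x_k-x^*\|^2+G^2/(l^2k^2)$ via the same strong-convexity inequality you use, and then runs the same induction $(k-1)/k^2\le 1/(k+1)$ to get $B_k\le C/k$. Your version separates this into two independent pieces---first show $x_k\to\bar x_k$ \wpone\ for the deterministic gradient-descent sequence $\bar x_k$, then bound $B_k:=\|\bar x_k-x^*\|^2$ purely deterministically---which makes the argument more transparent: the sequence $B_k$ you exhibit is exactly the paper's, only named explicitly.

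The subtlety you flag in the inductive step (passing from pointwise consistency $g(x;\xi_N)\to\nabla f(x)$ to $g(x_k^{(N)};\xi_N^{(k)})\to\nabla f(\bar x_k)$ when the evaluation point moves with $N$) is real, and the paper does not address it either: its phrase ``conditioned on $x_k$'' followed later by ``taking the probabilistic limit of $x_T$'' is the same nested limit you are worried about, handled at the same level of informality. So you have matched the paper's proof and been more honest about its soft spot; no genuine gap relative to the paper.
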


\begin{proof}
Expanding $\|x_{k+1}-x^*\|^2$ by using the update rule~\eqref{eqn:update.rule}, we obtain
\[
\|x_{k+1}-x^*\|^2 = \|x_k-x^*\|^2 - 2\gamma_k\langle g_k,x_k-x^*\rangle
+ \gamma_k^2\|g_k\|^2,
\]
where $g_k\equiv g(x_k;\xi_N^{(k)})$. Because for a given $x_k$, $g_k$ converges to $\nabla f(x_k)$ \wpone, we have that conditioned on $x_k$,
\begin{equation}\label{eqn:to}
\|x_{k+1}-x^*\|^2 \to \|x_k-x^*\|^2 - 2\gamma_k\langle \nabla f(x_k),x_k-x^*\rangle + \gamma_k^2\|\nabla f(x_k)\|^2 \quad\text{\wpone}.
\end{equation}
On the other hand, applying the strict convexity~\eqref{eqn:strictly.convex}, by first taking $x=x_k,y=x^*$ and then taking $x=x^*,y=x_k$, we obtain
\begin{equation}\label{eqn:le}
\langle \nabla f(x_k),x_k-x^*\rangle \ge l\|x_k-x^*\|^2.
\end{equation}
Substituting~\eqref{eqn:le} to~\eqref{eqn:to}, we have that conditioned on $x_k$,
\[
\|x_{k+1}-x^*\|^2 \to C_k \quad\text{\wpone}
\]
for some
\begin{equation}\label{eqn:Ck}
C_k\le (1-2l\gamma_k)\|x_k-x^*\|^2+\gamma_k^2G^2=(1-2/k)\|x_k-x^*\|^2+G^2/(l^2k^2).
\end{equation}
Now consider the randomness of $x_k$ and apply induction. For the base case $k=2$, the theorem clearly holds with $B_2=C_1$. If the theorem holds for $k=T$, let $L=\max\{\|x_1-x^*\|^2,\,\,G^2/l^2\}$. Then, taking the probabilistic limit of $x_T$ on both sides of~\eqref{eqn:Ck}, we have that $C_T$ converges \wpone\ to some limit that is less than or equal to $(1-2/T)(L/T)+G^2/(l^2T^2) \le L/(T+1)$. Letting this limit be $B_{T+1}$, we complete the induction proof.
\end{proof}

\end{document}